\newcommand{\cmark}{\ding{51}}%
\newcommand{\xmark}{\ding{55}}%
\def\dclesize{\ThisStyle{\raisebox{-.7pt}{\scalebox{1.45}{$\SavedStyle\bigcirc$}}}}
\def\dcle{\ensurestackMath{\stackon[0pt]{\geqq}{\dclesize}}}
\def\cle{\def\stacktype{L}\mathbin{\scalerel*{\dcle}{\dclesize}}}
\DeclareMathOperator*{\argmin}{min}
\definecolor{burgundy}{rgb}{0.7, 0.0, 0.13}
\definecolor{forestgreen}{rgb}{0.13, 0.55, 0.13}
\definecolor{flame}{rgb}{0.89, 0.35, 0.13}
\algnewcommand{\LineComment}[1]{\State{\textcolor{blue}{\(\triangleright\) #1}}}
\begin{document}

\title{Network Tomography with Path-Centric Graph Neural Network}

\author{Yuntong Hu}
\email{yuntong.hu@emory.edu}
\orcid{0000-0003-3802-9039}
\affiliation{%
  \institution{Emory University}
  \city{Atlanta}
  \state{GA}
  \country{USA}
}

\author{Junxinag Wang}
\email{junxiang.wang@alumni.emory.edu}
\orcid{0000-0002-6635-4296}
\affiliation{%
  \institution{NEC Labs America}
  \city{Princeton}
  \state{NJ}
  \country{USA}
}

\author{Liang Zhao}
\email{liang.zhao@emory.edu}
\orcid{0002-2648-9989}
\affiliation{%
  \institution{Emory University}
  \city{Atlanta}
  \state{GA}
  \country{USA}
}

\renewcommand{\shortauthors}{Yuntong et al.}

\begin{abstract}
Network tomography is a crucial problem in network monitoring, where the observable path performance metric values are used to infer the unobserved ones, making it essential for tasks such as route selection, fault diagnosis, and traffic control.
However, most existing methods either assume complete knowledge of network topology and metric formulas—an unrealistic expectation in many real-world scenarios with limited observability—or rely entirely on black-box end-to-end models.
To tackle this, in this paper, we argue that a good network tomography requires synergizing the knowledge from both data and appropriate inductive bias from (partial) prior knowledge. To see this, we propose Deep Network Tomography (DeepNT), a novel framework that leverages a path-centric graph neural network to predict path performance metrics without relying on predefined hand-crafted metrics, assumptions, or the real network topology. The path-centric graph neural network learns the path embedding by inferring and aggregating the embeddings of the sequence of nodes that compose this path. Training path-centric graph neural networks requires learning the neural netowrk parameters and network topology under discrete constraints induced by the observed path performance metrics, which motivates us to design a learning objective that imposes connectivity and sparsity constraints on topology and path performance triangle inequality on path performance. Extensive experiments on real-world and synthetic datasets demonstrate the superiority of DeepNT in predicting performance metrics and inferring graph topology compared to state-of-the-art methods.
\end{abstract}

\begin{CCSXML}
<ccs2012>
   <concept>
       <concept_id>10003033.10003039</concept_id>
       <concept_desc>Networks~Network tomography</concept_desc>
       <concept_significance>500</concept_significance>
   </concept>
   <concept>
       <concept_id>10010147.10010257.10010293</concept_id>
       <concept_desc>Computing methodologies~Machine learning approaches</concept_desc>
       <concept_significance>500</concept_significance>
   </concept>
   <concept>
       <concept_id>10003752.10003809.10003716</concept_id>
       <concept_desc>Theory of computation~Network optimization</concept_desc>
       <concept_significance>300</concept_significance>
   </concept>
</ccs2012>
\end{CCSXML}

\ccsdesc[500]{Networks~Network tomography}
\ccsdesc[500]{Computing methodologies~Machine learning approaches}
\ccsdesc[300]{Theory of computation~Network optimization}

\keywords{Deep network tomography, graph structure learning, path-centric graph neural networks.}

\received{20 February 2007}
\received[revised]{12 March 2009}
\received[accepted]{5 June 2009}

\maketitle

\section{Introduction}

Network tomography seeks to infer unobserved network characteristics using those that are observed. More specifically, one may observe path performance metrics (PPMs), such as path delay and capacity, by measuring the two endpoints of the path. Hence, network tomography can use the observations of the PPMs of some pairs of endpoints to infer those of the remaining pairs, because many PPMs can be written as aggregations of corresponding measures on the edges which are typically far fewer the paths they can make up. Network tomography plays a crucial role in applications such as route selection \citep{ikeuchi2022network, tao2024delay}, fault diagnosis \citep{ramanan2015nettomo, qiao2020robust}, and traffic control \citep{zhang2018network, pan2020edge, lev2023traffic}.
In real-world applications, many network characteristics are inaccessible. For example, in a local area network connected to the Internet, internal devices remain hidden due to security policies (\hyperref[problem]{Figure}~\ref{problem}), necessitating network tomography to infer path-related states like latency and congestion \citep{cao2012routing}. Similar challenges arise in transportation networks for estimating arrival times and traffic conditions \citep{zhang2018network}, and in social networks for uncovering hidden connections \citep{xing2009state}.

\begin{figure}[t!]
  \centering
  \includegraphics[width=0.45\textwidth]{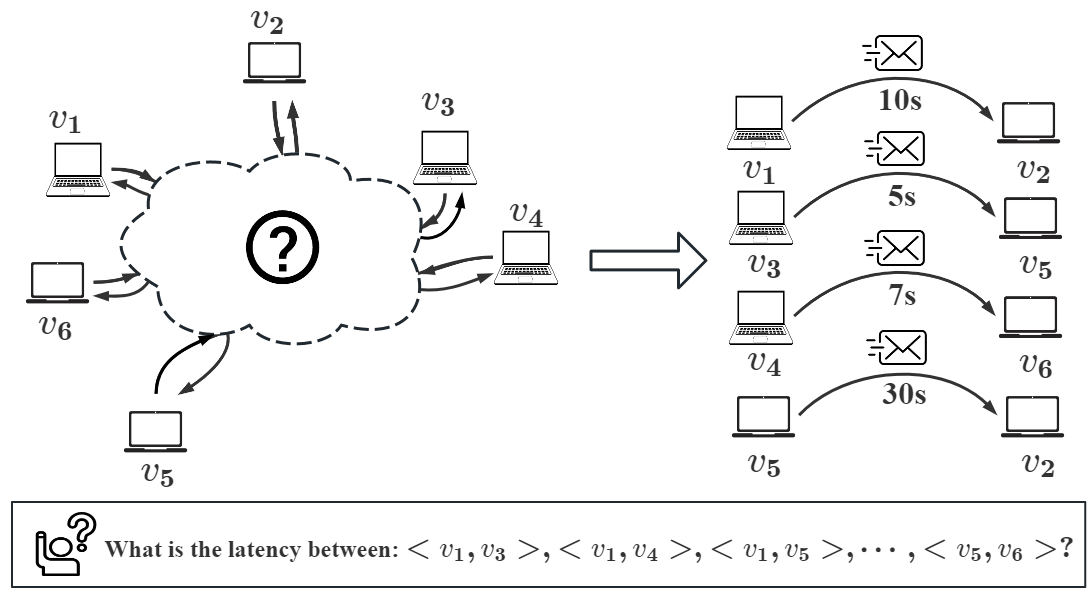}
  \caption{An illustration of network tomography in a sample network, where the end-to-end latency needs to be predicted when the network topology is not available.}
  \label{problem}
\vspace{-15pt}
\end{figure}

Network tomography is very challenging since the PPM values of a pair of nodes are jointly determined by the specific path in a particular network topology under a certain PPM type. To make this problem solvable, traditional network tomography approaches rely on the observed network topology and predefined, hand-crafted PPM calculations, focusing on either additive metrics, where the combined metric over a path is the sum of the involved link metrics (e.g., delay), or non-additive metrics, where the path performance is a nonlinear combination of link metrics \citep{feng2020bound, xue2022paint}. Other prescribed methods depend on assumptions like rare simultaneous failures \citep{lai2000measuring,carter1996measuring,jain2002end}, minimal sets of network failures \citep{duffield2003simple,duffield2006network,kompella2007detection}, or sparse performance metrics \citep{zhang2009spatio,firooz2010network,xu2011compressive}. These methods rely heavily on human-defined heuristics and rules, making their inference limited and biased by human domain knowledge, especially for many areas where we do not know what PPMs best model the network process.
For instance, a heuristic rule that assumes rare simultaneous network failures may be effective for localizing network bottlenecks in a computer network, but would not be suitable for environments like cloud computing or distributed systems, where performance degradation often involves multiple simultaneous disruptions across different nodes or links. 
More recently, dynamic routing \citep{tagyo2021network, sartzetakis2023network} and deep learning approaches \citep{ma2020neural, sartzetakis2022machine, tao2023network} have attempted to bypass the need for prior knowledge on PPMs by directly learning end-to-end models from data (e.g., predicting PPMs given the path's two endpoints). Hence, although they avoided traditional methods' heavy dependency on the observed network topology and prior knowledge of PPMs, they went to the other extreme, by typically completely overlooking the prior knowledge of the PPMs and the intrinsic relation between paths and edges.

To overcome the complementary drawbacks of traditional and deep learning-based methods, we pursue our method, Deep Network Tomography (\textbf{DeepNT}), which can infer the network topology and how it determines the PPMs, by deeply characterizing the network process by eliciting and synergizing the knowledge from both training data and partial knowledge of the inductive bias of PPMs. More concretely, we propose a new path-centric graph neural network that can infer the PPMs values of a path by learning its path embedding composed by the inferred sequence of node embeddings along this path. Training path-centric graph neural networks requires learning the neural network parameters and network topology following mixed connectivity constraint and path performance traignle inequailty constraints.

In summary, our primary contributions are as follows:
\begin{itemize}[leftmargin=*]
\item \textbf{New Problem.} We formulate the learning-based network tomography problem as learning representations for end-node pairs to simplify the optimization and identify unique challenges that arise in its real-world applications.
\item \textbf{New Computational Framework.} We propose a novel model for inferring unavailable adjacency matrices and metrics of unmeasured paths, learning end-node pair representations in an end-to-end manner.
\item \textbf{New Optimization Algorithm.} We propose to infer adjacency matrices with strongly and weakly connectivity constraints and a triangle inequality constraint. We proposed to transfer the discrete connectivity constraints into continuous forms for numerical optimization.
\item \textbf{Extensive Experiment Evaluation.} Extensive experiments on real-world and synthetic datasets demonstrate the outstanding performance of DeepNT. DeepNT outperforms other state-of-the-art models in predicting different path performance metrics as well as reconstructing network adjacency matrices.
\end{itemize}

\section{Related Work}\label{sec:related work}

\subsection{Network Tomography}
Network Tomography involves inferring internal network characteristics using performance metrics, which can be broadly classified as additive or non-additive. \textit{Additive metrics} frame the network tomography problem as a linear inverse problem, often assuming a known network topology and link-path relationships \citep{gurewitz2001estimating,liang2003maximum,chen2010network}. Statistical methods such as Maximum Likelihood Estimation (MLE) \citep{wandong2011research,teng2024learning}, Expectation Maximization (EM) \citep{bu2002network,wei2007mobile,wandong2011research}, and Bayesian estimation \citep{zhang2006origin,wandong2011research} are employed to solve this problem. Algebraic approaches, such as System of Linear Equations (SLE) \citep{bejerano2003robust,chen2003tomography,gopalan2011identifying} and Singular Value Decomposition (SVD) \citep{chua2005efficient,song2008netquest}, that rely on traceroute work well in certain scenarios but are often blocked by network providers to maintain the confidentiality of their routing strategies. When link performance metrics are sparse, compressive sensing techniques are used to identify all sparse link metrics \citep{firooz2010network,xu2011compressive}. Furthermore, studies have explored the sufficient and necessary conditions to identify all link performance metrics with minimal measurements \citep{gopalan2011identifying,alon2014economical}. \textit{Non-additive metrics}, such as boolean metrics, introduce additional complexity and constraints. These studies often assume that multiple simultaneous failures are rare, focusing on identifying network bottlenecks \citep{bejerano2003robust,horton2003number}. However, the assumption of rare simultaneous failures is not always valid. Some works address this by identifying the minimum set of network failures or reducing the number of measurements required \citep{duffield2006network,zeng2012automatic,ikeuchi2022network}. Additionally, several papers have proposed conditions and algorithms to efficiently detect network failures \citep{he2018distributed,nicola2018tight,bartolini2020fundamental,ibraheem2023network}, and some studies have attempted to apply deep learning to this field \citep{ma2020neural, sartzetakis2022machine, tao2023network}. However, most existing works rely on hand-crafted rules and specific assumptions, making them specialized for certain applications and unsuitable where prior knowledge of network properties or topology is unavailable.

\subsection{GNNs for Graph Structure Learning}
GNNs for Graph Structure Learning can be classified into approaches for learning discrete graph structures (i.e., binary adjacency matrices) and weighted graph structures (i.e., weighted adjacency matrices). Discrete graph structure approaches typically sample discrete structures from learned probabilistic adjacency matrices and subsequently feed these graphs into GNN models. Notable methods in this category include variational inference \citep{chen2018structured}, bilevel optimization \citep{franceschi2019learning}, and reinforcement learning \citep{kazemi2020representation}. However, the non-differentiability of discrete graph structures poses significant challenges, leading to the adoption of weighted graph structures, which encode richer edge information. A common approach involves establishing graph similarity metrics based on the assumption that node embeddings during training will resemble those during inference. Popular similarity metrics include cosine similarity \citep{nguyen2010cosine}, radial basis function (RBF) kernel \citep{yeung2007kernel}, and attention mechanisms \citep{chorowski2015attention}. While graph similarity techniques are applied in fully-connected graphs, graph sparsification techniques explicitly enforce sparsity to better reflect the characteristics of real-world graphs \citep{chen2020reinforcement, jin2020graph}. Additionally, graph regularization is employed in GNN models to enhance generalization and robustness \citep{chen2020iterative}. In this work, we leverage GNNs to learn end-node pair representations, enabling simultaneous prediction of path performance metrics and inference of the network topology.

\begin{figure*}[tb]
  \centering
  \includegraphics[width=\textwidth]{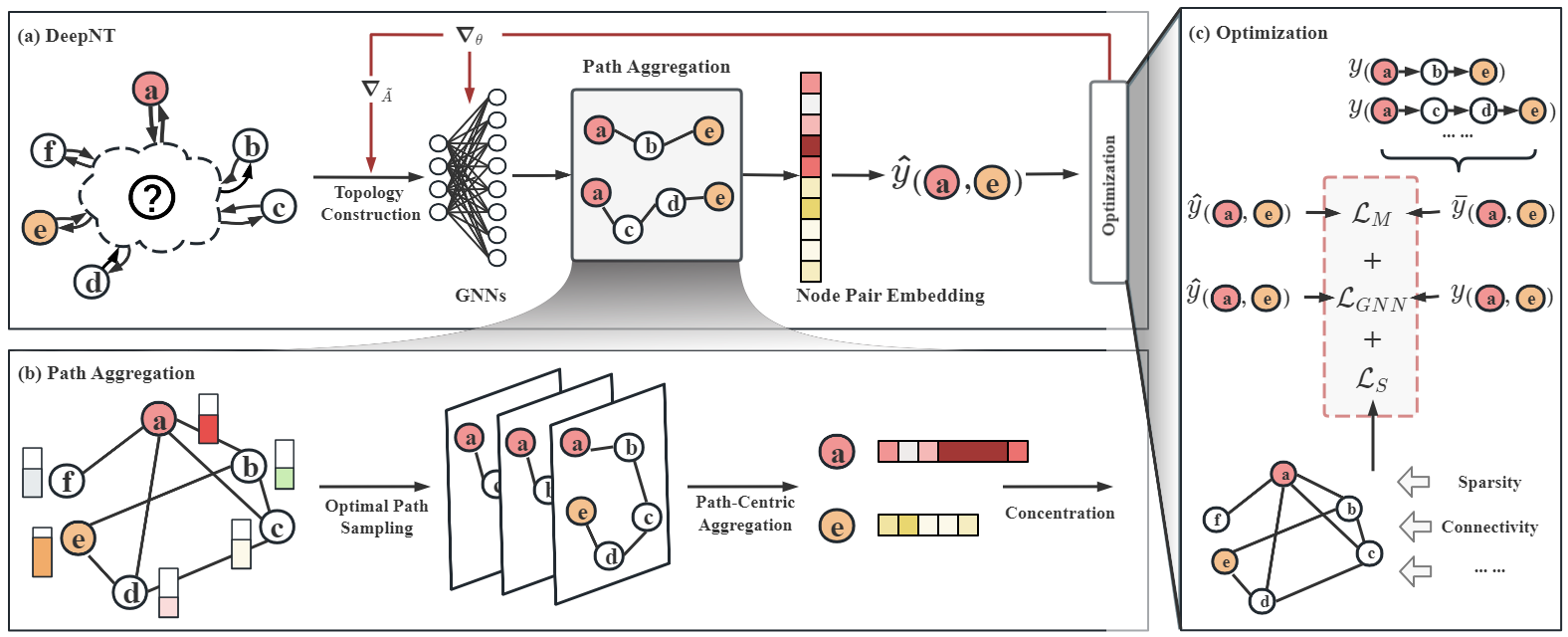}
  \caption{Overall framework of proposed deep network tomography solution.}
  \label{Flow}
  \vspace{-7pt}
\end{figure*}

\section{Problem Formulation}\label{sec:problem}

\textbf{Graphs}. A connected network $\mathcal G$ is defined as \(\mathcal G= (V, E) \), where \( V \) and $E \subseteq V \times V$ represent the node set and edge set, respectively, let \( A \) denote the adjacency matrix of \( \mathcal{G} \).

\vspace{2pt}
\noindent\textbf{Path Performance Metrics (PPMs).} Given a graph $\mathcal{G}= (V, E)$, let $\mathcal{P}_{uv} = \{p_{uv}^n\}_{n=1}^{N}$ represent the set of all possible paths from node $u$ to $v$, where $u, v \in V$, and $N$ denotes the number of possible paths between $u$ and $v$.
Let $y_{e}$ be the performance metric value of an individual edge, where $e\in E$. The path performance metric value is defined as the cumulative performance of all edges on a path, where the cumulative calculation depends on the type of metric being considered. The unified path performance metric is defined as follows:
\begin{equation}
    y_{uv}^n = \mathop{\bigotimes}\limits_{e_i\in p_{uv}^n} y_{e_i},~\text{where}~\bigotimes \in \{\sum, \prod, \bigwedge, \bigvee, \min, \cdots\},
\end{equation}
where $\bigotimes$ represents an operator that varies based on the type of path performance metrics, such as additive, multiplicative, boolean, min/max, etc. The optimal path performance between two nodes is defined as $y_{uv} = \{y_{uv}^n \mid y_{uv}^n \cle y_{uv}^k, ~\forall p_{uv}^k \neq p_{uv}^n \in \mathcal{P}_{uv}\}$, where $\cle$ indicates better performance, depending on the specific type of path performance metric.
For additive metrics like latency, \(\bigotimes = \sum\) and \(\cle = \leq\), where the path metric is the sum of edge latencies, with lower values indicating better performance.
For min metrics like capacity, \(\bigotimes = \min\) and \(\cle = \max\), where the minimum capacity along the path defines overall path performance, and higher values are preferable.

\vspace{2pt}
\noindent\textbf{Network Tomography.}  Define \(T = \{\langle u, v \rangle\}_{u \neq v \in V} \) as the set of all node pairs, where \( |T| = \binom{|V|}{2} \). Let \( S \subset T \) be a subset of node pairs for which the end-to-end optimal PPMs are measured. The exact path information between any two nodes is unknown.
Network tomography aims to use the measured PPMs values in \( S \) to predict the end-to-end optimal PPMs value of unmeasured node pairs in \( T \setminus S \). The optimal path between two nodes is typically determined by the Best Performance Routing (BPR). For instance, in a computer network, with measured end-to-end transmission delays for certain node pairs $S \subset T$, the goal of network tomography is to infer the minimum delays for the unmeasured pairs in $T \setminus S$, when the exact path information for node pairs in $T \setminus S$ is unknown.

\section{Deep Network Tomography}


\textbf{Overview.} Network tomography is very challenging since the PPM values of a pair of nodes are jointly determined by the specific path in a particular network topology under a certain PPM type. Hence, we propose a DeepNT to jointly infer network topology, consider path candidates, and learn path performance metrics, in order to effectively predict the PPM values of a node pair. Specifically, we propose a path-centric graph neural network to learn the candidate paths' embedding from the embeddings of the nodes on them and then aggregate them into node pair embedding for the final PPM value prediction, as illustrated in \hyperref[Flow]{Figure} \hyperref[Flow]{2(a)} and detailed in \hyperref[sec:gnn]{Section} \ref{sec:gnn}. To infer the network topology, DeepNT introduces a learning objective (\hyperref[Flow]{Figure} \hyperref[Flow]{2(c)}) that updates the adjacency matrix of the network by imposing constraints on connectivity and sparsity, as detailed in \hyperref[sec:topology]{Section} \ref{sec:topology}. This allows for the simultaneous prediction of PPM values and inference of network structure. Moreover, to leverage the inductive bias inherent in different types of PPMs, we introduce path performance triangle inequalities that further refine our predictions, as outlined in \hyperref[sec:bound]{Section} \ref{sec:bound}.

\subsection{Path-Centric Graph Neural Network}\label{sec:gnn}
\textbf{Candidate paths' information elicitation and encoding.} Due to unknown paths from partial network topology, we aggregate multiple potential paths to capture PPM context. Specifically, for each node pair, e.g., \(\langle u, v \rangle\), we leverage BPR to sample $N$ loopless paths between them based on the adjacent matrix $\tilde{A}$, denoted as \(\mathcal{P}_{uv}^L = \{p_{uv}^{(n)}\}_{n \in [1, N]}\), ensuring that the lengths do not exceed $L$. Then, the node embeddings of $u$ and $v$ are updated with a path aggregation layer. For node \( v \), we first compute the attention scores between \( v \) and all nodes along the path from \( v \) to \( u \):
\begin{equation}
    e^{(n)}_{vz}=\sigma(r^\top[(h_v, h^{(n)}_z]) \Longrightarrow~ \alpha^{(n)}_{vz} = \text{softmax}(e^{(n)}_{vz}),
\end{equation}
where \( z \in p^{(n)}_{uv} \), \( \sigma \) is an activation function, and \( r \) is a predefined vector. We then aggregate path information to obtain path-centric node pair embeddings:
\begin{align}
   &\hat{h}^{(n)}_v=h_v+\sigma\bigg(\sum\nolimits_{z \in p^{(n)}_{uv}}\alpha^{(n)}_{vz} h^{(n)}_z\bigg), \\
   &\hat{h}_v=\text{READOUT}(\{\hat{h}^{(n)}_v \mid p^{(n)}_{uv} \in \mathcal{P}_{uv}^L \}),
\end{align}
where \( \text{READOUT}(\cdot) \) is a permutation-invariant function. $\hat{h}_u$ is obtained in the same way. This path-centric embedding aggregates the local neighborhood information of the end-node pair as well as the information in potential optimal paths connecting them. Finally, the concatenated representation of the end-node pair is passed through a projection module to predict the performance metric value $\hat{y}_{uv}$ via $f_{\theta}: (u, v, \tilde{A}) \rightarrow \hat{y}_{uv}$. The objective function can be formulated as,
\begin{equation}\label{eq:gnn}
    \min_{\theta, \tilde{A}} \mathcal{L}_{GNN}(\theta, \tilde{A}, Y) = \sum_{u,v\in V} l(f_{\theta}(u, v, \tilde{A}), y_{uv}),~s.t.,~\tilde{A} \in \mathcal{A},
\vspace{-3pt}
\end{equation}
where $\theta$ indicates the parameters of $f(\theta)$,  $l(\cdot,\cdot)$ is to measure the difference between the prediction $f_{\theta}(u, v, \tilde{A})$ and the target value $y_{uv}$, e.g., cross entropy for boolean metrics and $\ell_2$ norm for additive metrics. Another objective will be introduced in following \hyperref[sec:topology]{Section} \ref{sec:topology} to infer a optimal symmetric adjacency matrix \(\tilde{A} \in \mathcal{A}\) where $\mathcal{A}$ represents the set of valid adjacency matrices specified in \hyperref[sec:topology]{Section} \ref{sec:topology}. 
We then introduce a training penalty that constrains the DeepNT model with a path performance triangle inequality, applicable to any type of path performance metric.

Let $\mathcal{AP}$ denote the set of all possible paths between node pair combinations. DeepNT-$\mathcal{AP}$, utilizing the proposed path-centric aggregation with access to $\mathcal{AP}$, effectively distinguishes node pairs.

\begin{theorem}[DeepNT-$\mathcal{AP}$ Distinguishes Node Pairs Beyond 1-WL]\label{th3}
Let $G = (V, E)$, and let $\langle u, v \rangle$ and $\langle u', v' \rangle$ be two node pairs in $V \times V$ such that the local neighborhoods of $u$ and $u'$ are identical up to $L$ hops, and similarly for $v$ and $v'$. If the sets of paths $\mathcal{P}_{uv}^L$ and $\mathcal{P}_{u'v'}^L$ are different, DeepNT will assign distinct embeddings $f_{\text{DeepNT}}(u, v) \neq f_{\text{DeepNT}}(u', v')$.
\end{theorem}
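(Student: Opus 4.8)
\medskip
\noindent\textbf{Proof plan.} I would split the argument into two halves. First, the \emph{base} message-passing layers alone cannot tell the two pairs apart, which is precisely the ``beyond 1-WL'' content: since the $L$-hop neighborhoods of $u$ and $u'$ are isomorphic, and likewise those of $v$ and $v'$, the depth-$L$ computation tree rooted at $u$ coincides with the one rooted at $u'$, so any $1$-WL-bounded scheme — in particular the layers producing the base embeddings — gives $h_u = h_{u'}$, $h_v = h_{v'}$, and the embeddings $h^{(n)}_z$ of nodes lying on the short paths are matched under the neighborhood isomorphisms. Hence a model that only looked at neighborhoods would return $f(u,v) = f(u',v')$; all discriminative power must come from the path-aggregation step, and moreover the node-level inputs to that step agree on the two sides up to the isomorphisms, which is exactly what makes the next step nontrivial.

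\medskip
\noindent\textbf{Step: path aggregation injects the path-set information.} For a fixed path $p^{(n)}_{uv}$, the vector $\hat{h}^{(n)}_v = h_v + \sigma\bigl(\sum_{z \in p^{(n)}_{uv}} \alpha^{(n)}_{vz} h^{(n)}_z\bigr)$ is a deterministic function of $h_v$ together with the multiset of node embeddings along $p^{(n)}_{uv}$ (the attention weights $\alpha^{(n)}_{vz}$ being themselves determined by that data), and $\hat{h}_v = \mathrm{READOUT}(\{\hat{h}^{(n)}_v : p^{(n)}_{uv} \in \mathcal{P}_{uv}^L\})$ is then a function of the resulting multiset of path embeddings. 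Invoking the universal-approximation construction of injective multiset functions (as used to establish GIN-style expressiveness), one can choose $\sigma$, the attention vector $r$, and $\mathrm{READOUT}$ so that $p \mapsto \hat{h}^{(n)}_v$ is injective on paths with pairwise-distinct node content and $\mathrm{READOUT}$ is injective on finite multisets; composing, $\mathcal{P}_{uv}^L \mapsto \hat{h}_v$ and symmetrically $\mathcal{P}_{uv}^L \mapsto \hat{h}_u$ are injective. Since $\mathcal{P}_{uv}^L \neq \mathcal{P}_{u'v'}^L$, at least one of $\hat{h}_v \neq \hat{h}_{v'}$, $\hat{h}_u \neq \hat{h}_{u'}$ holds, so the concatenation $[\hat{h}_u, \hat{h}_v] \neq [\hat{h}_{u'}, \hat{h}_{v'}]$, and taking the projection $f_{\theta}$ injective on the relevant bounded domain gives $f_{\text{DeepNT}}(u,v) \neq f_{\text{DeepNT}}(u',v')$.

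\medskip
\noindent\textbf{Main obstacle.} The delicate point is the injectivity invoked above: the attention-weighted sum over a path is permutation-invariant and sees only the \emph{multiset} of embeddings on that path, so two genuinely different paths whose nodes happen to carry coinciding embeddings — which local symmetry can force, exactly in the regime where the first step applies — would not be separated. I would close this gap either by adding a mild hypothesis (node features, or an appended positional/identity encoding, are injective on the union of the two $L$-hop neighborhoods), after which distinct paths have distinct node content and the Deep-Sets injectivity argument goes through verbatim, or by reading the hypothesis ``$\mathcal{P}_{uv}^L \neq \mathcal{P}_{u'v'}^L$'' as inequality of multisets of node-embedding sequences rather than of abstract vertex sequences. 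The remaining pieces — the $1$-WL collapse and the concatenation/projection bookkeeping — are routine.
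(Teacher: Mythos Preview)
Your proposal is essentially correct and, in fact, more careful than the paper's own argument, but it takes a different route. The paper does not argue injectivity of the path-aggregation step directly via GIN-style universal approximation of multiset functions; instead, it imports the Path-Tree machinery of Michel et al.\ (their Theorem~3.3), proving first a node-level statement (the paper's Theorem~\ref{th2}): distinct WL-Trees imply distinct Path-Trees, and even identical WL-Trees can have distinct Path-Trees, so DeepNT with access to all paths is strictly more expressive than 1-WL at the node level. Theorem~\ref{th3} is then dispatched in one line by observing that pair embeddings are concatenations of node embeddings, and $\mathcal{P}_{uv}^L \neq \mathcal{P}_{u'v'}^L$ forces the Path-Trees to differ.

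What each approach buys: the paper's route is shorter because the heavy lifting is outsourced to the cited Path-Tree result, but its injectivity claim (``straightforward to prove as injective, as it employs a permutation-invariant readout'') is at least as hand-wavy as the step you flagged. Your route is self-contained and makes explicit exactly where the argument needs an extra hypothesis --- injective node identifiers or reading the path-set inequality at the level of embedding sequences --- which the paper glosses over. The ``main obstacle'' you identify is real and is not more cleanly resolved in the paper; both arguments ultimately rest on the same implicit assumption that distinct paths are distinguishable by their node content.
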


The analysis of DeepNT's expressive power and the proof of \hyperref[th3]{Theorem} \ref{th3} are provided in \hyperref[expressive]{Appendix} \ref{expressive}.

\subsection{Path Performance Triangle Inequality} \label{sec:bound}
Since PPM is for the optimal path among all the paths between the node pairs, the performance of any path between these two nodes cannot exceed the performance of the observed optimal path. For each pair of nodes \(u\) and \(v\), and for any other node \(z\) on the path, the following triangle inequality must hold: \(y_{uv} \cle (y_{uz} \bigotimes y_{zv}\)) because $y_{uv}$, corresponding to the best path between $\langle u, v \rangle$, should be no worse than $y_{uz} \bigotimes y_{zv}$, the best path between $\langle u, v \rangle$ going through $z$. Here, \(\cle\) is a generalized inequality relation that will be specified according to the type of PPM of interest. For example, when the performance metric is \textit{delay} (where better performance corresponds to lower values), \(\cle\) becomes \(\leq\). Thus, we will only punish the violation of the above generalized inequality, resulting in the following generalized ReLU style loss given $f_{\theta}(u,v)=\hat y_{u,v}$:
\begin{equation}\label{eq:bound}
    \min_{\theta, \tilde{A}} \mathcal{L}_{M}(\theta, \tilde{A}, Y) = \sum_{u,v\in V} l_{M}(f_{\theta}(u, v, \tilde{A}), y_{uz} \bigotimes y_{zv}),
\vspace{-3pt}
\end{equation}
where \(z\) is a random node in \(p^{(n)}_{uv}\), and \(p^{(n)}_{uv}\) is the path with the optimal performance in \(\mathcal{P}_{uv}^L\). The function \(l_M\) computes a penalty enforcing that the predicted performance does not exceed the bounded value, defined as \(l_M(\hat{y}, y) = \max(0, \hat{y} \ominus y)\), where \(\ominus\) is also chosen adaptively based on the type of path performance metric. As a result, the estimated performance metric is always bounded by the optimal performance among the observed paths.

\subsection{Graph Structure Completion}\label{sec:topology}
Real-world networks are typically sparse, noisy, connected, and partially unobservable \citep{zhou2013learning, fan2017network}, making accurate topology learning essential for network tomography. The learned adjacency matrix must ensure reachability of observed node pairs while preserving sparsity and connectivity.
Given an observed network adjacency matrix \( A \), we formulate this process as a structure learning problem:
\begin{equation}\label{eq:sparse}
    \min_{\tilde{A}} \mathcal{L}_{S} = \|\mathcal{R}(M\odot\tilde A)-\mathcal{R}(A)\|_{F}^2 + S(\tilde{A}), \quad \text{s.t.}~\tilde{A} \in \mathcal{A},
\end{equation}
where \( \mathcal{R}(A) \) maps \( A \) to a binary matrix indicating the existence of paths between node pairs. \( M \in \{0,1\}^{|V|\times|V|} \) marks observed edge connectivity, and \( \|\cdot\|_F^2 \) ensures the learned adjacency matrix \( \tilde{A} \) preserves the observed pairwise reachability.
\( S(\tilde{A}) \) imposes constraints to enforce both connectivity and sparsity. To encourage sparsity, we minimize the \( \ell_1 \)-norm, which promotes a sparse adjacency structure \citep{candes2012exact, tsybakov2011nuclear}. 

For connectivity, although the network topology is unknown or partially incorrect, the existence of at least one path between observed node pairs ensures the graph is weakly connected. PPMs further guarantee path reachability, allowing us to infer structural properties and detect Strongly Connected Components (SCCs) by Tarjan’s Algorithm \citep{tarjan1972depth}. The challenge lies in embedding this prior knowledge into \hyperref[eq:sparse]{Equation} (\ref{eq:sparse}), as enforcing connectivity constraints is inherently discrete. To enable gradient-based optimization, we transform these constraints into a continuous form through a novel convex formulation that encodes both weak connectivity and SCCs, as detailed in \hyperref[th:connect]{Theorem} \ref{th:connect}.

\begin{theorem}[Directed Graph Connectivity Constraints]\label{th:connect}
Let \( G = (V, E) \) be a directed graph with a non-negative adjacency matrix \( A_G \in \mathbb{R}^{|V| \times |V|} \). For any graph with adjacency matrix \( A \in \mathbb{R}^{n \times n} \), define:
\[
Z_A = \operatorname{diag}\left(A \cdot \mathbf{1}^\top\right) - A + \frac{1}{|n|} \mathbf{1} \mathbf{1}^\top,
\]
where \( \mathbf{1} \in \mathbb{R}^{|n|} \) is an all-ones vector. Let \( \mathcal{G} \) denote the set of SCCs identified via PPMs. The entire graph is weakly connected, with each SCC \( g \in \mathcal{G} \) strongly connected, if and only if:
\begin{enumerate}
    \item \( Z_{A_G} \succ 0, \quad A \geq 0, \)
    \item \( Z_{A_g} + Z_{A_g}^\top \succ 0, \quad A_g \geq 0, \forall g \in \mathcal{G}, \)
\end{enumerate}
where \( A_g \) is the adjacency matrix of component \( g \).
\end{theorem}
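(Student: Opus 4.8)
The plan is to split the claim into its two regimes — weak connectivity of the ambient graph (condition~1) and strong connectivity of each SCC (condition~2) — and to reduce both to a single spectral engine. That engine is the classical fact that for a symmetric nonnegative $W$ with Laplacian $L=\operatorname{diag}(W\mathbf1)-W$, the rank-one--corrected matrix $L+\tfrac1n\mathbf1\mathbf1^\top$ is positive definite if and only if $0$ is a simple eigenvalue of $L$, i.e. the underlying graph is connected. I would first record this lemma via the orthogonal split $x=c\mathbf1+y$ with $y\perp\mathbf1$: since $x^\top(L+\tfrac1n\mathbf1\mathbf1^\top)x=y^\top L y+nc^2$ and $y^\top L y\ge 0$ with equality exactly on $\ker L$, strict positivity for all $x\neq0$ is equivalent to $\ker L=\operatorname{span}(\mathbf1)$, hence to connectivity. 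The remaining work is to port this symmetric statement to the non-symmetric directed matrices $Z_{A}$.

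For condition~1, I would read $Z_{A_G}\succ0$ through its quadratic form $x^\top Z_{A_G}x = x^\top(D_{\mathrm{out}}-A_G)x+\tfrac1n(\mathbf1^\top x)^2$, where $D_{\mathrm{out}}=\operatorname{diag}(A_G\mathbf1)$ and $(D_{\mathrm{out}}-A_G)\mathbf1=0$. I would do the necessity direction first, as it is clean: if $G$ is not weakly connected, permute $V$ so that $A_G$ is block-diagonal with no cross-block arcs; then $D_{\mathrm{out}}-A_G$ is block-diagonal and annihilates every block-constant vector, so choosing a block-constant $x\neq0$ with $\mathbf1^\top x=0$ yields $x^\top Z_{A_G}x=0$ and hence $Z_{A_G}\not\succ0$. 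For sufficiency I would derive the symmetric-part identity
\[
\operatorname{sym}(Z_{A_G})\;=\;L_{W}\;+\;\tfrac12\operatorname{diag}\!\big(d_{\mathrm{out}}-d_{\mathrm{in}}\big)\;+\;\tfrac1n\mathbf1\mathbf1^{\top},
\]
where $W=\tfrac12(A_G+A_G^\top)$ is the symmetrized graph and $L_W$ its genuine PSD Laplacian, and then argue that weak connectivity makes $L_W$ definite on $\mathbf1^\perp$ while the $\tfrac1n\mathbf1\mathbf1^\top$ term supplies strict positivity along $\mathbf1$.

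For condition~2, each $g\in\mathcal G$ is an SCC, so I would use that strong connectivity of $g$ is equivalent to irreducibility of $A_g$. The explicit symmetrization gives $Z_{A_g}+Z_{A_g}^\top=2\operatorname{sym}(Z_{A_g})$, which expands through the same identity (with the $\tfrac1{n_g}\mathbf1\mathbf1^\top$ correction sized to $g$). I would invoke Perron--Frobenius for the irreducible nonnegative $A_g$ to certify that the symmetrized graph $W_g$ is connected, so that $L_{W_g}$ has a simple zero mode handled by the rank-one term, and that no new null direction is introduced; conversely, a reducible $A_g$ produces a block-constant null vector exactly as in the necessity step, giving $Z_{A_g}+Z_{A_g}^\top\not\succ0$. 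Assembling the two equivalences, condition~1 certifies weak connectivity of the ambient graph and condition~2, applied to the SCCs returned by Tarjan's algorithm on the PPM-derived reachability, certifies that each is strongly connected.

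The hard part will be the sufficiency halves for genuinely directed (non-Eulerian) graphs. The symmetric part of the directed out-Laplacian carries the indefinite degree-imbalance term $\tfrac12\operatorname{diag}(d_{\mathrm{out}}-d_{\mathrm{in}})$, which is absent in the symmetric theory, so the crux is to show that the rank-one correction together with (weak, resp. strong) connectivity still forces strict positivity despite this term. For balanced graphs the imbalance term vanishes and the undirected argument transfers verbatim; the unbalanced case is where the real work lies, and I expect to control it by bounding the imbalance contribution against the spectral gap of $L_W$ on $\mathbf1^\perp$ and the $\tfrac1n\mathbf1\mathbf1^\top$ mass along $\mathbf1$.
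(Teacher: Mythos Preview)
Your skeleton matches the paper's proof: the paper too argues condition~1 by writing out the quadratic form of $Z_{A_G}$ and appealing to the rank-one correction, and handles condition~2 by pairing Perron--Frobenius irreducibility (for the ``strongly connected $\Rightarrow$ PD'' half) with a block upper-triangular decomposition and an explicit null vector (for the converse). The substantive difference is that you correctly isolate the degree-imbalance term $\tfrac12\operatorname{diag}(d_{\mathrm{out}}-d_{\mathrm{in}})$ in $\operatorname{sym}(Z_{A_G})$, whereas the paper simply writes $x^\top Z_{A_G}x=\sum_{i\neq j}A_{ij}(x_i-x_j)^2+\tfrac1{|V|}\bigl(\sum_i x_i\bigr)^2$, an identity that holds only for symmetric $A_G$ and thereby hides exactly the obstruction you flag.

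That said, your proposed remedy---controlling the imbalance contribution by the spectral gap of $L_W$---cannot succeed in general, because the forward implication in condition~1 is false as stated. Take $|V|=3$ with $A_G=\begin{psmallmatrix}0&1&1\\0&0&0\\0&0&0\end{psmallmatrix}$: the graph is weakly connected, yet $x=(0,1,-1)^\top\in\mathbf1^\perp$ satisfies $Z_{A_G}x=0$, so $Z_{A_G}\not\succ0$. Here the imbalance is maximal relative to the Laplacian gap, and no rank-one mass along $\mathbf1$ can rescue a null direction orthogonal to $\mathbf1$. The difficulty you spotted is therefore a genuine gap rather than a technicality; the sufficiency direction needs an extra hypothesis (balancedness $d_{\mathrm{out}}=d_{\mathrm{in}}$, or building $Z$ from the symmetrized adjacency, as the paper in fact silently does in its downstream constraints~(8)--(9)). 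Your necessity arguments via block-constant null vectors are correct and coincide with the paper's.
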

The proof can be found in \hyperref[proof:connect]{Appendix} \ref{proof:connect}. 
Incorporating the connectivity constraints from \hyperref[th:connect]{Theorem}~\ref{th:connect} and the sparsity constraints on the inferred adjacency matrices \(\tilde{A}\) and \(\tilde{A}_g\) for the graph and components \( g \in \mathcal{G} \), the topological constraint \( S(\tilde{A}) \) for DeepNT is derived, and \hyperref[eq:sparse]{Equation}~\ref{eq:sparse} is reformulated as:
\begin{align}\label{eq:sparse_final}
    \begin{gathered}
        \min_{\tilde{A}} \|\mathcal{R}(M \odot \tilde{A}) - \mathcal{R}(A)\|_{F}^2 + \alpha \|\tilde{A}\|_1, \\[-0.5em]
        \text{s.t.} \quad \operatorname{diag}(\tilde{A} \cdot \mathbf{1}^\top) - \tilde{A} + \mathbf{1}^\top \mathbf{1}/|V| \succ 0,~\tilde{A} \geq 0, \tilde{A} \in \mathcal{A} \\
        \operatorname{diag}(\tilde{A}_g \cdot \mathbf{1}^\top) - \operatorname{Sym}(\tilde{A}_g) + \mathbf{1}^\top \mathbf{1}/|g| \succ 0,~ \forall g \in \mathcal{G}, \tilde{A}_g \geq 0.
    \end{gathered}
\end{align}
where $\operatorname{Sym}(A) = (A + A^\top)/2$ means the symmetrization of matrix $A$, $\alpha$ controls the contribution from the sparsity constraint. 

\subsection{Optimization for DeepNT}
Based on the defined constraints, we formulate the optimization of DeepNT as a continuous optimization problem solvable via gradient descent. We jointly learn the GNN model and the adjacency matrix to infer the optimal network topology for the GNN model. The final objective function of DeepNT is given as,
\begin{align}\label{eq:loss}
    &\argmin_{\theta, \tilde{A}} \mathcal{L} = \mathcal{L}_{GNN} + \mathcal{L}_{S} + \gamma\mathcal{L}_{M}\\
    \text{s.t.}\quad \operatorname{diag}((\tilde{A} +& \tilde{A}^\top)/2 \cdot \mathbf{1}^\top) - (\tilde{A} + \tilde{A}^\top)/2 + \mathbf{1}^\top \mathbf{1}/|V| \succ 0,~ \tilde{A} \in \mathcal{A},\notag
\end{align}
where $\gamma$ is a predefined parameter. Jointly optimizing $\theta$ and $\tilde{A}$ is challenging because it involves navigating a highly non-convex optimization landscape with interdependent variables. The optimization problem in DeepNT is formulated as follows:
\begin{equation}\label{eq:loss}
    \mathcal{F}=\min_{\theta, \tilde{A}} g(\theta,\tilde{A})+ \alpha ||\tilde{A}||_1,
\end{equation}
where $g(\theta,\tilde{A})=\mathcal{L}_{GNN}  + \gamma\mathcal{L}_{M} +  \|\mathcal{R}(M\odot\tilde A)-\mathcal{R}(A)\|_{F}^2$. 
To facilitate effective learning, we introduce a proximal gradient algorithm with extrapolation, as detailed in \hyperref[alg:training]{Algorithm} \ref{alg:training}, where the graph is constrained to be weakly connected,
$\operatorname{prox}_{\lambda \|\cdot\|_1}(f) = S_{\lambda}(f) = \arg\min\nolimits_x \left( \frac{1}{2} \|x-f\|_F^2 + \lambda \|x\|_1 \right)$ is the soft-thresholding operator. The algorithm initializes \(\theta\) and \(\tilde{A}\) (line 1), then iteratively updates parameters via gradient descent and optimizes \(\tilde{A}\) with proximal and connectivity constraints (lines 2-19) until convergence.

Convergence of our optimization algorithm is guaranteed by \hyperref[th1]{Theorem} \ref{th1}, whose proof is provided in \hyperref[proof]{Appendix} \ref{proof}.

\begin{theorem}\label{th1}
   Assume $g(\theta,\tilde{A})$ is Lipschitz continuous with coefficient $l>0$, and its gradient $\nabla g(\theta,\tilde{A})$ is Lipschitz continuous with coefficient $L>0$. Let $\frac{1}{L}\leq\omega\leq\sqrt{\frac{L}{L+l}}$, and let $\{(\theta^k,\tilde{A}^k)\}$ be a sequence generated by \hyperref[alg:training]{Algorithm} \ref{alg:training}, then any of its limit point $(\theta^*,\tilde{A}^*)$ is a stationary point of \hyperref[eq:loss]{Equation} \eqref{eq:loss}. 
\end{theorem}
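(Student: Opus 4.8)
The plan is to cast Algorithm \ref{alg:training} as an instance of an inertial / extrapolated proximal alternating scheme of the form $x^{k+1}\in\operatorname{prox}_{\alpha\|\cdot\|_1}\!\big(x^k-\omega\,\nabla g(\hat x^k)\big)$, where $x=(\theta,\tilde A)$ and $\hat x^k$ denotes the extrapolated point, and then verify the three ingredients that make such a scheme converge to a stationary point of $\mathcal F$: (i) a sufficient-decrease inequality for a suitable Lyapunov function, (ii) boundedness of the generated sequence, and (iii) a subgradient bound linking $\operatorname{dist}(0,\partial\mathcal F(x^{k+1}))$ to successive iterate gaps. The non-convexity of $g$ (coming from $\mathcal L_{GNN}$ and $\mathcal L_M$) is handled entirely through the two Lipschitz hypotheses — $g$ is $l$-Lipschitz and $\nabla g$ is $L$-Lipschitz — which is why the admissible step-size window is pinned to $\tfrac1L\le\omega\le\sqrt{\tfrac{L}{L+l}}$.

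First I would write down the descent lemma for $g$: since $\nabla g$ is $L$-Lipschitz, $g(x^{k+1})\le g(\hat x^k)+\langle\nabla g(\hat x^k),x^{k+1}-\hat x^k\rangle+\tfrac L2\|x^{k+1}-\hat x^k\|_F^2$. Next, the proximal step gives the optimality inequality $\alpha\|x^{k+1}\|_1+\tfrac1{2\omega}\|x^{k+1}-(\hat x^k-\omega\nabla g(\hat x^k))\|_F^2\le\alpha\|\hat x^k\|_1+\tfrac1{2\omega}\|\omega\nabla g(\hat x^k)\|_F^2$, which after expansion yields $\alpha\|x^{k+1}\|_1+\langle\nabla g(\hat x^k),x^{k+1}-\hat x^k\rangle+\tfrac1{2\omega}\|x^{k+1}-\hat x^k\|_F^2\le\alpha\|\hat x^k\|_1$. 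Adding the two and using $\|\hat x^k\|_1\le\|x^k\|_1+\beta_k\|x^k-x^{k-1}\|_1$ together with the $l$-Lipschitz bound $|g(x^k)-g(\hat x^k)|\le l\|\hat x^k-x^k\|_F=l\beta_k\|x^k-x^{k-1}\|_F$ produces, after collecting terms,
\begin{equation}
\mathcal F(x^{k+1})\le\mathcal F(x^k)-\Big(\tfrac1{2\omega}-\tfrac L2\Big)\|x^{k+1}-\hat x^k\|_F^2+C\,\beta_k\|x^k-x^{k-1}\|_F,\notag
\end{equation}
and the remaining work is to absorb the extrapolation error term into a Lyapunov function $\Phi_k=\mathcal F(x^k)+c\|x^k-x^{k-1}\|_F^2$; the condition $\omega\le\sqrt{L/(L+l)}$ is exactly what guarantees a constant $c>0$ and a uniform $\delta>0$ with $\Phi_{k+1}\le\Phi_k-\delta\|x^k-x^{k-1}\|_F^2$, hence $\sum_k\|x^{k+1}-x^k\|_F^2<\infty$ and $\|x^{k+1}-x^k\|_F\to0$. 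Because $\mathcal F$ is coercive in $\tilde A$ through $\alpha\|\tilde A\|_1$ and the iterates stay feasible in the closed set $\mathcal A$ intersected with the (closed) PSD constraint, the sequence is bounded, so limit points exist.

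Finally, for a limit point $x^*=(\theta^*,\tilde A^*)$ along a subsequence $x^{k_j}\to x^*$, I would pass to the limit in the prox optimality condition $0\in\omega^{-1}(x^{k+1}-\hat x^k)+\nabla g(\hat x^k)+\alpha\,\partial\|\cdot\|_1(x^{k+1})+N_{\mathcal A}(x^{k+1})$: the term $\omega^{-1}(x^{k+1}-\hat x^k)\to0$ by the summability just established, $\nabla g(\hat x^{k_j})\to\nabla g(x^*)$ by continuity, and by outer semicontinuity of $\partial\|\cdot\|_1$ and of the normal cone $N_{\mathcal A}$ (plus lower semicontinuity of $\mathcal F$ to identify the function value) we obtain $0\in\nabla g(x^*)+\alpha\,\partial\|\cdot\|_1(x^*)+N_{\mathcal A}(x^*)$, i.e.\ $x^*$ is a stationary point of \eqref{eq:loss}. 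The main obstacle I anticipate is the sufficient-decrease step: carefully choosing the Lyapunov coefficient $c$ and checking that the quadratic form in $(\|x^{k+1}-\hat x^k\|_F,\|x^k-x^{k-1}\|_F)$ is negative definite under precisely the stated bound on $\omega$ — this is where the $\sqrt{L/(L+l)}$ threshold comes from and where the interaction between the inertial term and the two different Lipschitz constants must be tracked with care; a secondary technical point is justifying that the continuous PSD/connectivity constraint set is closed so that feasibility is preserved in the limit and the normal-cone calculus applies.
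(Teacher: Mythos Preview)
Your proposal follows the same three-step skeleton as the paper --- sufficient decrease, boundedness of the iterates via coercivity of the $\ell_1$ term, and passage to the limit in the proximal first-order optimality condition to conclude $0\in\nabla g(\theta^*,\tilde A^*)+\alpha\,\partial\|\tilde A^*\|_1$. The substantive difference is in the first step: the paper simply \emph{asserts} that under the stated step-size window the objective itself is monotone, $\mathcal F(\theta^{k+1},\tilde A^{k+1})\le\mathcal F(\theta^k,\tilde A^k)$, and then argues $\|(\theta^{k+1},\tilde A^{k+1})-(\theta^k,\tilde A^k)\|\to0$ from that, whereas you build a Lyapunov function $\Phi_k=\mathcal F(x^k)+c\|x^k-x^{k-1}\|_F^2$ and show that the condition $\omega\le\sqrt{L/(L+l)}$ is exactly what makes the associated quadratic form negative definite. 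Your route is the more rigorous one for an extrapolated scheme, and it also makes explicit where the upper bound on $\omega$ comes from, which the paper never derives. You additionally carry the constraint set through a normal-cone term $N_{\mathcal A}$ in the limiting optimality condition; the paper does not do this and only remarks in passing that the connectivity projection is a ``bounded perturbation that preserves the descent property.'' One minor point to clean up: in the algorithm $\theta$ is updated by a plain gradient step (no $\ell_1$ prox), so when you write the joint prox on $x=(\theta,\tilde A)$ you should separate the two blocks rather than applying $\operatorname{prox}_{\alpha\|\cdot\|_1}$ to the full vector.
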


We demonstrate that DeepNT's network tomography capabilities are guaranteed under sufficient observations, as established in \hyperref[thm:convergence]{Theorem}~\ref{thm:convergence}.

\begin{theorem}[Convergence of DeepNT Predictions to True Pairwise Metrics] \label{thm:convergence}
Let $G = (V, E)$. Suppose DeepNT is trained with an increasing number of observed node pairs $S \to T$, where $S \subseteq T = V \times V$, and the number of sampled paths $N \to \infty$. Then, the predicted pairwise metrics $\hat{y}_{uv} = f_{\text{DeepNT}}(u, v; \theta, \tilde{A})$ converge in expectation to the true metrics $y_{uv}$, i.e.,
\[
\lim_{S \to T} \lim_{N \to \infty} \mathbb{E}_{\langle u, v \rangle \sim T} \left[\lvert \hat{y}_{uv} - y_{uv} \rvert \right] = 0,
\]
\end{theorem}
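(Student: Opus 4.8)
The plan is to separate the two limits and, crucially, to exploit that $T$ is a \emph{finite} set of size $\binom{|V|}{2}$, so that ``$S\to T$'' simply means $S=T$ from some point on. Once $S=T$, the quantity $\mathbb{E}_{\langle u,v\rangle\sim T}[\lvert\hat y_{uv}-y_{uv}\rvert]$ is, up to the uniform normalization $1/|T|$, exactly the training loss $\mathcal{L}_{GNN}$ measured with the $\ell_1$ discrepancy. Hence it suffices to establish two facts: (i) with full path information the DeepNT objective $\mathcal{L}=\mathcal{L}_{GNN}+\mathcal{L}_S+\gamma\mathcal{L}_M$ attains its infimum $0$ at some $(\theta^\star,\tilde A^\star)$ that recovers the true metric function and a topology consistent with all observed reachability; and (ii) the optimizer of \hyperref[alg:training]{Algorithm}~\ref{alg:training} reaches such a global minimizer.

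First I would handle the inner limit $N\to\infty$. BPR samples loopless paths of length at most $L$ according to $\tilde A$; assuming a non-degenerate (stochastic) tie-breaking rule, every such path has strictly positive sampling probability, so as $N\to\infty$ the set $\mathcal{P}_{uv}^L$ converges almost surely to the full collection $\mathcal{AP}_{uv}$ of loopless $u$--$v$ paths of length $\le L$, and in particular contains the BPR-optimal path realizing $y_{uv}$ (this requires $L$ to be at least the diameter of $G$, which I would state as a standing assumption). Thus in the limit the path-aggregation layer sees complete path information and DeepNT-$\mathcal{AP}$ applies: by \hyperref[th3]{Theorem}~\ref{th3} it separates all distinct node pairs, and combined with standard universal-approximation of the attention aggregation, the permutation-invariant READOUT, and the MLP projection head, the hypothesis class $\{f_\theta(\cdot,\cdot,\tilde A)\}$ contains a function matching $y_{uv}$ on every pair of $T$. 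At that same choice of parameters $\mathcal{L}_M=0$, since the triangle inequality $y_{uv}\cle y_{uz}\bigotimes y_{zv}$ holds for the true optimal metrics by definition and the ReLU-style penalty $l_M$ is therefore inactive; and taking $\tilde A$ equal to the true adjacency matrix gives $\mathcal{R}(M\odot\tilde A)=\mathcal{R}(A)$ and, by \hyperref[th:connect]{Theorem}~\ref{th:connect}, satisfies the connectivity constraints, so $\mathcal{L}_S$ collapses to the attainable sparsity term; arguing along the path of minimizers, the reachability/connectivity feasible set is nonempty and the sparsity regularizer only shifts the optimizer, not the conclusion $\hat y_{uv}=y_{uv}$.

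Next I would take the outer limit $S\to T$. Because $T$ is finite, this is the statement that eventually the sum defining $\mathcal{L}_{GNN}$ ranges over all of $T$; combined with the previous step, $\inf_{\theta,\tilde A}\mathcal{L}=0$ and the infimum is attained at a $(\theta^\star,\tilde A^\star)$ for which $\hat y_{uv}=y_{uv}$ for every $\langle u,v\rangle$. Invoking convergence of the proximal-gradient-with-extrapolation scheme to this minimizer (in the regime where, as in \hyperref[th1]{Theorem}~\ref{th1}, the iterates converge, strengthened so that the reached stationary point is in fact global), one obtains $\mathbb{E}_{\langle u,v\rangle\sim T}[\lvert\hat y_{uv}-y_{uv}\rvert]=0$ in the limit, and therefore the iterated limit in the statement equals $0$.

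I expect the main obstacle to be bridging optimization and approximation: \hyperref[th1]{Theorem}~\ref{th1} certifies only a stationary point, not a global optimum, so rigor requires either an extra hypothesis — e.g., the GNN is sufficiently overparameterized that all stationary points of $\mathcal{L}_{GNN}$ interpolate the finite training set, an NTK/interpolation-regime assumption — or a weakening of the claim to ``under the assumption that training attains the global minimum.'' Secondary technical points to nail down are: the diameter condition $L\ge\mathrm{diam}(G)$, so the true optimal path is actually representable; a mild non-degeneracy of BPR tie-breaking, so that no loopless path of length $\le L$ has zero sampling probability; and, if one insists on treating an intermediate regime $S\subsetneq T$ rather than the finite-$T$ shortcut, a uniform-convergence bound (Rademacher complexity or covering number of the DeepNT hypothesis class) to pass from empirical to population error — which the finite-$T$ reduction otherwise lets us avoid.
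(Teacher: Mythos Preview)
Your proposal is correct and follows essentially the same route as the paper: pass to $\mathcal{SP}\to\mathcal{AP}$ via $N\to\infty$, invoke the expressiveness results (\hyperref[th2]{Theorem}~\ref{th2}/\hyperref[th3]{Theorem}~\ref{th3}) to ensure the true metrics are representable, use the finiteness of $T$ so that $S\to T$ makes $T\setminus S\to\emptyset$, and appeal to \hyperref[th1]{Theorem}~\ref{th1} for optimization convergence. The caveats you flag---that \hyperref[th1]{Theorem}~\ref{th1} certifies only a stationary point rather than a global minimizer, the need for $L\ge\mathrm{diam}(G)$, and the non-degenerate sampling assumption---are genuine and are glossed over in the paper's own argument as well, so your treatment is, if anything, more careful.
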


The proof of \hyperref[thm:convergence]{Theorem} \ref{thm:convergence} is detailed in \hyperref[expressive]{Appendix} \ref{proof:convergence}.

\begin{algorithm}
\caption{Optimization of DeepNT}
\label{alg:training}
\begin{algorithmic}[1]
\Require Input data \(S\), labels \(y\), momentum parameter \(\omega\), graph components \(\mathcal{G}\)
\Ensure Parameters \(\theta\) of DeepNT, inferred adjacency matrix \(\tilde{A}\)

\State \(\tilde{A}^{-1} \gets 0\), \(\tilde{A}^0 \gets 0\), \(\theta^{-1} \gets 0\), \(\theta^0 \gets 0\)

\While{stopping condition is not met}
    \LineComment{Momentum-accelerated updates for parameters and adjacency matrix.}
    \State \(\overline{\theta}^{k} \gets \theta^k + (1 - \omega)(\theta^k - \theta^{k-1})\)
    \State \(\overline{A}^{k} \gets \tilde{A}^{k} + (1 - \omega)(\tilde{A}^{k} - \tilde{A}^{k-1})\)

    \LineComment{Gradient descent update for model parameters.}
    \State \(\theta^{k+1} \gets \overline{\theta}^k - \omega \nabla g(\overline{\theta}^k, \overline{A}^k)\)

    \LineComment{Proximal optimization for the global adjacency matrix.}
    \State \(\tilde{A}^{k+1} \gets \operatorname{prox}_{\omega \alpha \|\cdot\|_1}\left(\overline{A}^k - \omega \nabla g(\overline{\theta}^k, \overline{A}^k)\right)\)

    \LineComment{Enforce global connectivity constraint on \(\tilde{A}^{k+1}\).}
    \State \( \Delta \tilde{A} \gets \operatorname{diag}(\operatorname{Sym}(\tilde{A}^{k+1}) \cdot \mathbf{1}^\top) - \operatorname{Sym}(\tilde{A}^{k+1}) + \mathbf{1}^\top \mathbf{1}/|V| \)
    \State \(\tilde{A}^{k+1} \gets \operatorname{prox}_{\succ}(\Delta\tilde{A}) \)

    \LineComment{Proximal updates and connectivity enforcement.}
    \For{each \(g \in \mathcal{G}\)}
        \State \(\tilde{A}_g^{k+1} \gets \operatorname{prox}_{\omega \alpha \|\cdot\|_1}\left(\tilde{A}_g^k - \omega \nabla g(\overline{\theta}^k, \tilde{A}_g^k)\right)\)
        \State \( \Delta \tilde{A}_g \gets \operatorname{diag}(\operatorname{Sym}(\tilde{A}_g^{k+1}) \cdot \mathbf{1}^\top) - \operatorname{Sym}(\tilde{A}_g^{k+1}) + \mathbf{1}^\top \mathbf{1}/|g| \)
        \State \(\tilde{A}_g^{k+1} \gets \operatorname{prox}_{\succ}(\Delta \tilde{A}_g)\)
    \EndFor
\EndWhile
\end{algorithmic}
\end{algorithm}

\section{Experiment}\label{sec:exp}

\subsection{Datasets}
We conduct experiments on three real-world datasets, encompassing transportation, social, and computer networks, each characterized by distinct path performance metrics. Transportation networks are collected from different cities. The social network dataset collects interactions between people on different online social platforms, including Epinions, Facebook and Twitter. The Internet dataset consists of networks with raw IPv6 or IPv4 probe data. Details on data statistics, data processing, and path performance metrics for each dataset can be found in \hyperref[sec:appendix:data]{Appendix} \ref{sec:appendix:data}.

We use a synthetic dataset to test the comprehensive performance of our model on networks of different sizes and properties, exploring the robustness and scalability of our model. Synthetic networks are generated using the Erdős-Rényi, Watts-Strogatz and Barabási–Albert models.
For network sizes in ${50i}_{i=1}^{50}$, each graph generation algorithm is used to generate 10 networks with varying edge probabilities for each network size (the edge probability represents the likelihood that any given pair of nodes in the network is directly connected by an edge). We focus on monitor-based sampling scenarios, where some nodes are randomly selected as monitors and the end-to-end path performance between the monitors and other nodes are sampled as training data. 
For all datasets, we set different sampling rates $\delta\in \{10\%, 20\% , 30\%\}$ to simulate the real network detection scenario \citep{ma2020neural}. The sampled path performance is used as training data, that is, $\delta$ of total node pairs are used as training data and the rest of node pairs are used for testing. 

\begin{table*}[tb]
\centering
\caption{Mean Absolute Percentage Error (MAPE $\downarrow$) and Mean Squared Error (MSE $\downarrow$) for \textbf{Additive Metrics} on real-world datasets. The best results are highlighted in \textbf{bold}. The second best results are \underline{underlined}. $-$ indicates that the model either fails to handle the large network or requires an excessive amount of time to perform network tomography.}
\begin{tabular}{cc|cc|cc|cc|cc}
\hline
\multicolumn{1}{c}{\multirow{2}{*}{\textbf{$\frac{|S|}{|T|}$}}} & \multicolumn{1}{c|}{\multirow{2}{*}{\textbf{Method}}}      & \multicolumn{2}{c|}{\textbf{Internet}} & \multicolumn{2}{c|}{\textbf{Social Network}} & \multicolumn{2}{c|}{\textbf{Transportation}} & \multicolumn{2}{c}{\textbf{Synthetic}}                                          \\ \cline{3-10} 
\multicolumn{2}{c|}{}  & \textbf{MAPE} $\downarrow$ & \textbf{MSE} $\downarrow$ & \textbf{MAPE} $\downarrow$ & \textbf{MSE} $\downarrow$ & \textbf{MAPE} $\downarrow$ & \textbf{MSE} $\downarrow$ & \textbf{MAPE} $\downarrow$ & \textbf{MSE} $\downarrow$ \\ \hline
\multicolumn{1}{c|}{\multirow{9}{*}{10\%}} & MMP+DAIL  &  0.9411 & 143.9463 & - & - & 0.7642 & 41.0764 & 0.6755 & 318.3382 \\
\multicolumn{1}{c|}{}                      & BoundNT        & 0.9250 & 126.2529 & - & - & 0.7183 & 28.4639  & 0.6229  & 244.3805  \\
\multicolumn{1}{c|}{}                      & Subito        &  0.9368 & 129.8293  & - & - & 0.7809 & 39.9722 & 0.6047 & 236.2874  \\
\multicolumn{1}{c|}{}                      & PAINT        & 0.9337 & 130.6045 & - & - & \underline{0.6508} &  \underline{27.8604} &  \underline{0.3493} & 136.8139  \\
\multicolumn{1}{c|}{}                      & MPIP         & 0.9274 & 125.7296 & - & -  & 0.7294  & 28.0741 & 0.6246 & 253.4835  \\
\multicolumn{1}{c|}{}                      & NMF      & 0.9316 & 135.2296 & - & - & 0.7306 & 33.2205 &  0.5413 & 203.2034 \\
\multicolumn{1}{c|}{}                      & NeuMF       & 0.8431 & 112.0205 & 1.4390 & 25.1763 & 0.6732 & 28.9212 & 0.3925 & 151.6863 \\
\multicolumn{1}{c|}{}                      & NeuTomography & \underline{0.8118}  & \underline{97.0785}  & \underline{1.3872} & \underline{21.5816} & 0.6948  & 30.2343 & 0.3629  & \underline{133.9919} \\ \cdashline{2-10}
\multicolumn{1}{c|}{}                      & \textbf{DeepNT}        &  \textbf{0.6907}  &  \textbf{84.4514}  & \textbf{0.8172} & \textbf{12.6533}  & \textbf{0.6342} & \textbf{24.0135} &  \textbf{0.2520} &  \textbf{79.3843} \\ \hline
\multicolumn{1}{c|}{\multirow{9}{*}{20\%}} & MMP+DAIL     & 0.8892 & 124.0720 & - & - & 0.6982  & 32.4886   & 0.6324 & 261.3459 \\
\multicolumn{1}{c|}{}                      & BoundNT        & 0.8935 & 120.4519 & - & - & 0.6507 &  27.9272  & 0.5587  & 196.9912  \\
\multicolumn{1}{c|}{}                      & Subito        & 0.9008 & 122.5825 & - & - & 0.6757 & 30.7168  & 0.5571  & 194.0589 \\
\multicolumn{1}{c|}{}                      & PAINT         & 0.8638 & 112.4626 & - & - & \underline{0.5983} & \underline{25.1261}  & \underline{0.3091} &  \underline{113.6392}  \\
\multicolumn{1}{c|}{}                      & MPIP         & 0.8901 & 118.9798 & - & - & 0.6419  &  27.1587   &  0.5663  & 202.6942 \\
\multicolumn{1}{c|}{}                      & NMF      & 0.8790 & 118.0108 & - & - & 0.6714 & 31.2973 &  0.5175 & 186.9012 \\
\multicolumn{1}{c|}{}                      & NeuMF       & 0.7998 & 95.3064 & 1.3116 & 20.8148 & 0.6264 & 26.5191 & 0.3692 & 139.7447 \\
\multicolumn{1}{c|}{}                      & NeuTomography & \underline{0.7547} & \underline{90.1461}  &  \underline{1.2211} & \underline{18.1076}  & 0.6175 & 25.4259 & 0.3315 & 119.6274 \\ \cdashline{2-10}
\multicolumn{1}{c|}{}                      & \textbf{DeepNT}        &  \textbf{0.6299} &  \textbf{76.5168}  & \textbf{0.7593} &  \textbf{12.0193} & \textbf{0.5543} & \textbf{21.6331} &   \textbf{0.2168} &  \textbf{66.5215} \\ \hline
\multicolumn{1}{c|}{\multirow{9}{*}{30\%}} & MMP+DAIL     & 0.8219 & 104.2967 & - & - & 0.5839  & 28.1040  & 0.5702 & 218.5386 \\
\multicolumn{1}{c|}{}                      & BoundNT         & 0.8593 & 110.3346 & - & - & 0.5124  & 20.6403  &  0.4772  & 170.3272 \\
\multicolumn{1}{c|}{}                      & Subito         & 0.8466 & 107.0691 & - & - & 0.5493  & 20.6268  & 0.4966   & 169.6914  \\
\multicolumn{1}{c|}{}                      & PAINT         & 0.8108 & 102.0409 & - & - & 0.4629 &  20.0037  & \underline{0.2916}  & \underline{92.2561}  \\
\multicolumn{1}{c|}{}                      & MPIP         & 0.8532 & 109.7416 & - & - & 0.4905  & 20.1655   & 0.4712 & 171.0216   \\
\multicolumn{1}{c|}{}                      & NMF      & 0.8162 & 107.8148 & - & - & 0.5188 & 21.5137 & 0.4349 & 159.9384 \\
\multicolumn{1}{c|}{}                      & NeuMF       & 0.7513 & 88.0015 & 1.1774 & 16.8202 & 0.4652 & 19.8196 & 0.3308 & 122.0492 \\
\multicolumn{1}{c|}{}                      & NeuTomography &  \underline{0.7276}  & \underline{84.2087}  & \underline{1.1378}  &  \underline{16.3775}   & \underline{0.4433}  & \underline{19.1260}  & 0.3025 &  97.6045 \\ \cdashline{2-10}
\multicolumn{1}{c|}{}                      & \textbf{DeepNT}        &  \textbf{0.5842}  &  \textbf{71.0797} &  \textbf{0.7119}  & \textbf{10.8074} & \textbf{0.3794}  & \textbf{18.6551} &  \textbf{0.1935} &  \textbf{59.0406}       \\ \hline
\end{tabular}
\label{tab:main:add}
\end{table*}

\subsection{Evaluation}

\textbf{Comparison Methods.} To evaluate the effectiveness of DeepNT, we compare it with the state-of-the-art network tomography methods. Details of the implementation can be found in \hyperref[sec:appendix:imp]{Appendix}~\ref{sec:appendix:imp}.
\textbf{MMP+DAIL} \citep{ma2013identifiability} optimizes additive performance metrics under the assumption of a known network topology and manageable, loop-free routing. 
\textbf{ANMI} \citep{ma2015optimal} locates problematic network links by employing a tunable threshold parameter and, given precise metric distributions, further estimates fine-grained link metrics. 
\textbf{AMPR} \citep{ikeuchi2022network} identifies network states in probabilistic routing environments by adaptively selecting measurements that maximize mutual information.
\textbf{BoundNT} \citep{feng2020bound} derives upper and lower bounds for unidentifiable links, using natural value bounds to constrain the solution space of the linear system. 
\textbf{Subito} \citep{tao2024delay} formulates a linear system and uses network tomography to estimate link delays with reinforcement learning. 
\textbf{PAINT} \citep{xue2022paint} iteratively estimates and refines link-level performance metrics, minimizing least square errors and discrepancies between estimated and observed shortest paths.
\textbf{MPIP} \citep{li2023bound} uses graph decomposition techniques and an iterative placement strategy to optimize monitor locations for improved inference of path metrics.
\textbf{NeuTomography} \citep{ma2020neural} learns the non-linear relationships between node pairs and the unknown underlying topological and routing properties by path augmentation and topology reconstruction.
Some studies formulate network tomography as a matrix factorization problem.
Therefore, we also include \textbf{NMF} \citep{lee2000algorithms} and \textbf{NeuMF} \citep{he2017neural} as comparison methods.

\subsubsection{Main Results of Path Performance Metric Prediction}

The topological incompleteness is 0.2 (i.e., 20\% of the edges are replaced by non-existent edges). For the deep learning models, all experiments are performed 10 times and we report the average accuracy. For the linear system based methods, we adopt the solution from the authors’ original implementation. \hyperref[tab:main:add]{Table} \ref{tab:main:add}, Table \ref{tab:main:mul}, \hyperref[tab:main:minmax]{Table} \ref{tab:main:minmax} and \hyperref[tab:main:bool]{Table} \ref{tab:main:bool}  report the results of predicting additive, multiplicative, min/max and boolean path performance metrics, respectively. $\frac{|S|}{|T|}$ means how many node pairs’ end-to-end path performance metric values are measured and used for training.

\begin{table*}[tb]
\centering
\begin{minipage}{\linewidth}
\centering
\caption{Mean Absolute Percentage Error (MAPE $\downarrow$) and Mean Squared Error (MSE $\downarrow$) for \textbf{Multiplicative Metrics} on real-world datasets. The best results are highlighted in \textbf{bold}. The second best results are \underline{underlined}. * indicates logarithmic transformations are used to convert multiplicative metrics to additive metrics, as these methods are designed for additive metrics.}
\begin{tabular}{cc|cc|cc|cc}
\hline
\multicolumn{1}{c}{\multirow{2}{*}{\textbf{$\frac{|S|}{|T|}$}}} & \multicolumn{1}{c|}{\multirow{2}{*}{\textbf{Method}}}      & \multicolumn{2}{c|}{\textbf{Internet}} & \multicolumn{2}{c|}{\textbf{Social Network}} & \multicolumn{2}{c}{\textbf{Synthetic}}                                          \\ \cline{3-8} 
\multicolumn{2}{c|}{}  & \textbf{MAPE} $\downarrow$ & \textbf{MSE} $\downarrow$ & \textbf{MAPE} $\downarrow$ & \textbf{MSE} $\downarrow$ & \textbf{MAPE} $\downarrow$ & \textbf{MSE} $\downarrow$ \\ \hline
\multicolumn{1}{c|}{\multirow{4}{*}{10\%}} & BoundNT (*)         & 0.3930 &  14.4673  & - & - & {0.0783} & {0.3651} \\
\multicolumn{1}{c|}{}                      & MPIP (*)        &  0.4007  &  16.4387  & - & - & {0.0791} & {0.3583} \\
\multicolumn{1}{c|}{}                      & NeuTomography & \underline{0.0632} & \underline{0.4216}  & \underline{0.0988} & \underline{0.0357} &  \underline{{0.0347}} & \underline{{0.0969}} \\ \cdashline{2-8}
\multicolumn{1}{c|}{}                      & \textbf{DeepNT}        & \textbf{0.0243} & \textbf{0.0438} &    
  \textbf{0.0620} &  \textbf{0.1247}  & \textbf{{0.0182}} & \textbf{{0.0154}} \\ \hline
\multicolumn{1}{c|}{\multirow{4}{*}{20\%}} & BoundNT (*)        & 0.3797 &  13.0014   & - & - & {0.0787} & {0.3301} \\
\multicolumn{1}{c|}{}                      & MPIP (*)        &  0.3835  &  12.5421   & - & - & {0.0803}  & {0.3498}  \\
\multicolumn{1}{c|}{}                      & NeuTomography &  \underline{0.0587}  & \underline{0.3493} &     \underline{0.0939} & \underline{0.0341}  & \underline{{0.0291}} & \underline{{0.0664}}  \\ \cdashline{2-8}
\multicolumn{1}{c|}{}                      & \textbf{DeepNT}       &  \textbf{0.0207} & \textbf{0.0257} & \textbf{0.0571} & \textbf{0.1094} & \textbf{{0.0136}} & \textbf{{0.0087}}  \\ \hline
\multicolumn{1}{c|}{\multirow{4}{*}{30\%}} & BoundNT (*)        &   0.3606  &  10.9892  & - & - & {0.0740} & {0.3125} \\
\multicolumn{1}{c|}{}                      & MPIP (*)        &  0.3588 &  12.8381   & - & - &  {0.0753} & {0.3216} \\
\multicolumn{1}{c|}{}                      & NeuTomography &  \underline{0.0526}  & \underline{0.2409}  &    \underline{0.0813} &  \underline{0.0226}  & \underline{{0.0243}} & \underline{{0.0381}} \\ \cdashline{2-8}
\multicolumn{1}{c|}{}                      & \textbf{DeepNT}     &   \textbf{0.0169} & \textbf{0.0093} & \textbf{0.0509}  & \textbf{0.0093}  &  \textbf{{0.0112}}   &  \textbf{{0.0083}}  \\ \hline
\end{tabular}
\label{tab:main:mul}
\end{minipage}

\vspace{2mm}
\begin{minipage}{\linewidth}
\centering
\caption{Mean Absolute Percentage Error (MAPE $\downarrow$) and Mean Squared Error (MSE $\downarrow$) for \textbf{Min or Max Metrics} on real-world datasets. The best results are highlighted in \textbf{bold}. The second best results are \underline{underlined}.}
\begin{tabular}{cc|cc|cc|cc}
\hline
\multicolumn{1}{c}{\multirow{2}{*}{\textbf{$\frac{|S|}{|T|}$}}} & \multicolumn{1}{c|}{\multirow{2}{*}{\textbf{Method}}}      & \multicolumn{2}{c|}{\textbf{Internet}} & \multicolumn{2}{c|}{\textbf{Transportation}} & \multicolumn{2}{c}{\textbf{Synthetic}}                                          \\ \cline{3-8} 
\multicolumn{2}{c|}{}  & \textbf{MAPE} $\downarrow$ & \textbf{MSE} $\downarrow$ & \textbf{MAPE} $\downarrow$ & \textbf{MSE} ($\times 10^6$) $\downarrow$ & \textbf{MAPE} $\downarrow$ & \textbf{MSE} $\downarrow$ \\ \hline
\multicolumn{1}{c|}{\multirow{3}{*}{10\%}} & ANMI         & 0.0907  & 52.2783 & \underline{1.0674} & \underline{83.3370}  &  {0.0975} & {70.4885} \\
\multicolumn{1}{c|}{}                      & NeuTomography & \underline{0.0741} & \underline{37.8309}  & 1.1983 & 114.1017 &  \underline{{0.0770}}  & \underline{{51.1739}} \\ \cdashline{2-8}
\multicolumn{1}{c|}{}                      & \textbf{DeepNT}        & \textbf{0.0640} &  \textbf{34.4012} &  \textbf{0.4744}   & \textbf{38.1392} &  \textbf{{0.0585}} & \textbf{{29.7446}} \\ \hline
\multicolumn{1}{c|}{\multirow{3}{*}{20\%}} &  ANMI         & 0.0929 &        50.2317 &  1.1205  & 87.6417  & {0.0973} & {67.7634} \\
\multicolumn{1}{c|}{}                      & NeuTomography & \underline{0.0596} & \underline{28.8063} &  \underline{0.9016} &  \underline{73.8099}  & \underline{{0.0633}} & \underline{{33.1365}} \\ \cdashline{2-8}
\multicolumn{1}{c|}{}                      & \textbf{DeepNT}        &  \textbf{0.0517} &  \textbf{22.7196} &  \textbf{0.5216}  & \textbf{28.5838}  &  \textbf{{0.0431}} & \textbf{{21.7088}}  \\ \hline
\multicolumn{1}{c|}{\multirow{3}{*}{30\%}} & ANMI         & 0.0944  & 52.5456 & 1.0233  & 84.9310  & {0.0911} & {70.6701}  \\
\multicolumn{1}{c|}{}                      & NeuTomography &  \underline{0.0552} & \underline{21.2428} & \underline{0.8278} &  \underline{55.4812}  &  \underline{{0.0468}} & \underline{{18.2206}} \\ \cdashline{2-8}
\multicolumn{1}{c|}{}                      & \textbf{DeepNT}        & \textbf{0.0396}  &  \textbf{14.2014}  &  \textbf{0.4863} & \textbf{22.5173}  &  \textbf{{0.0332}}   &  \textbf{{12.9561}}  \\ \hline
\end{tabular}
\label{tab:main:minmax}
\end{minipage}

\vspace{2mm}
\begin{minipage}{\linewidth}
\centering
\caption{Accuracy (ACC in \% $\uparrow$) and $F_1$ Score $\uparrow$ for \textbf{Boolean Metrics} on real-world datasets. The best results are highlighted in \textbf{bold}. The second best results are \underline{underlined}. $-$ means that the method cannot handle the network.}
\begin{tabular}{cc|cc|cc|cc}
\hline
\multicolumn{1}{c}{\multirow{2}{*}{\textbf{$\frac{|S|}{|T|}$}}} & \multicolumn{1}{c|}{\multirow{2}{*}{\textbf{Method}}}      & \multicolumn{2}{c|}{\textbf{Social Network}} & \multicolumn{2}{c|}{\textbf{Transportation}} & \multicolumn{2}{c}{\textbf{Synthetic}}                                          \\ \cline{3-8} 
\multicolumn{2}{c|}{}  & \textbf{ACC} $\uparrow$ & \textbf{$F_1$} $\uparrow$ & \textbf{ACC} $\uparrow$ & \textbf{$F_1$} $\uparrow$ & \textbf{ACC} $\uparrow$ & \textbf{$F_1$} $\uparrow$  \\ \hline
\multicolumn{1}{c|}{\multirow{3}{*}{10\%}}                       & AMPR          &  -  &  - & 0.7059 & 0.6184   & {0.6299} & {0.6178} \\
\multicolumn{1}{c|}{}                      & NeuTomography &  \underline{0.6429}  & \underline{0.6838}  &  \underline{0.7858}   & \underline{0.7493}  &  \underline{{0.6784}} & \underline{{0.7226}} \\ \cdashline{2-8}
\multicolumn{1}{c|}{}                      & \textbf{DeepNT}       & \textbf{0.6854} & \textbf{0.7122} & \textbf{0.8144}  & \textbf{0.8003} & \textbf{{0.7383}} &  \textbf{{0.7709}}  \\ \hline
\multicolumn{1}{c|}{\multirow{3}{*}{20\%}}                       & AMPR          &  - &  -  & 0.7517  & 0.6361    & {0.6497} & {0.6246} \\
\multicolumn{1}{c|}{}                      & NeuTomography &  \underline{0.6826} & \underline{0.7148}  & \underline{0.8092}  & \underline{0.7652} &  \underline{{0.6980}}  & \underline{{0.7837}} \\ \cdashline{2-8}
\multicolumn{1}{c|}{}                      & \textbf{DeepNT}       & \textbf{0.7045} & \textbf{0.7273} & \textbf{0.8317}  &  \textbf{0.8117}  & \textbf{{0.7726}}  & \textbf{{0.8163}} \\ \hline
\multicolumn{1}{c|}{\multirow{3}{*}{30\%}}                    &   AMPR          & - &  - &  0.7696 & 0.6605   & {0.6707} & {0.6422} \\
\multicolumn{1}{c|}{}                      & NeuTomography & \underline{0.7213}  & \underline{0.7484} & \underline{0.8551} & \underline{0.7795}  &  \underline{{0.7426}} & \underline{{0.7932}}  \\ \cdashline{2-8}
\multicolumn{1}{c|}{}                      & \textbf{DeepNT}       &  \textbf{0.7539} & \textbf{0.7691} &  \textbf{0.8784} & \textbf{0.8450} & \textbf{{0.8063}} & \textbf{{0.8361}} \\ \hline
\end{tabular}
\label{tab:main:bool}
\end{minipage}
\vspace{-12pt}
\end{table*}

For all types of path performance metrics, DeepNT consistently outperforms all other comparison methods. Low-rank approximation methods like NMF and NeuMF perform worse as observations decrease, as they depend on sufficient data (e.g., path performance across many node pairs) for reliable predictions. For additive metrics, although NeuTomography provides the second-best performance in both MAPE (0.8118) and MSE (97.0785) on Internet dataset at the 10\% sampling rate, DeepNT significantly outperforms it with a MAPE of 0.6907 and an MSE of 84.4514. The same pattern persists across the 20\% and 30\% sampling rates. DeepNT exhibits exceptional robustness when faced with different types of network structures (e.g., social, transportation, and synthetic networks), which current models are not well-equipped to handle. For multiplicative metrics, DeepNT’s performance remains stable across varying levels of network sparsity, as evidenced by its consistent top rankings in all scenarios. DeepNT achieves a MAPE of 0.0509 and an MSE of 0.0093 on large-scale social networks at sampling rate of 30\%, which is much better than NeuTomography’s MAPE of 0.0813 and MSE of 0.0226, while other models cannot handle these large-scale networks. 

\begin{figure*}[thb]
\centering
\begin{minipage}[t]{0.245\textwidth}
    \centering
    \includegraphics[width=\textwidth]{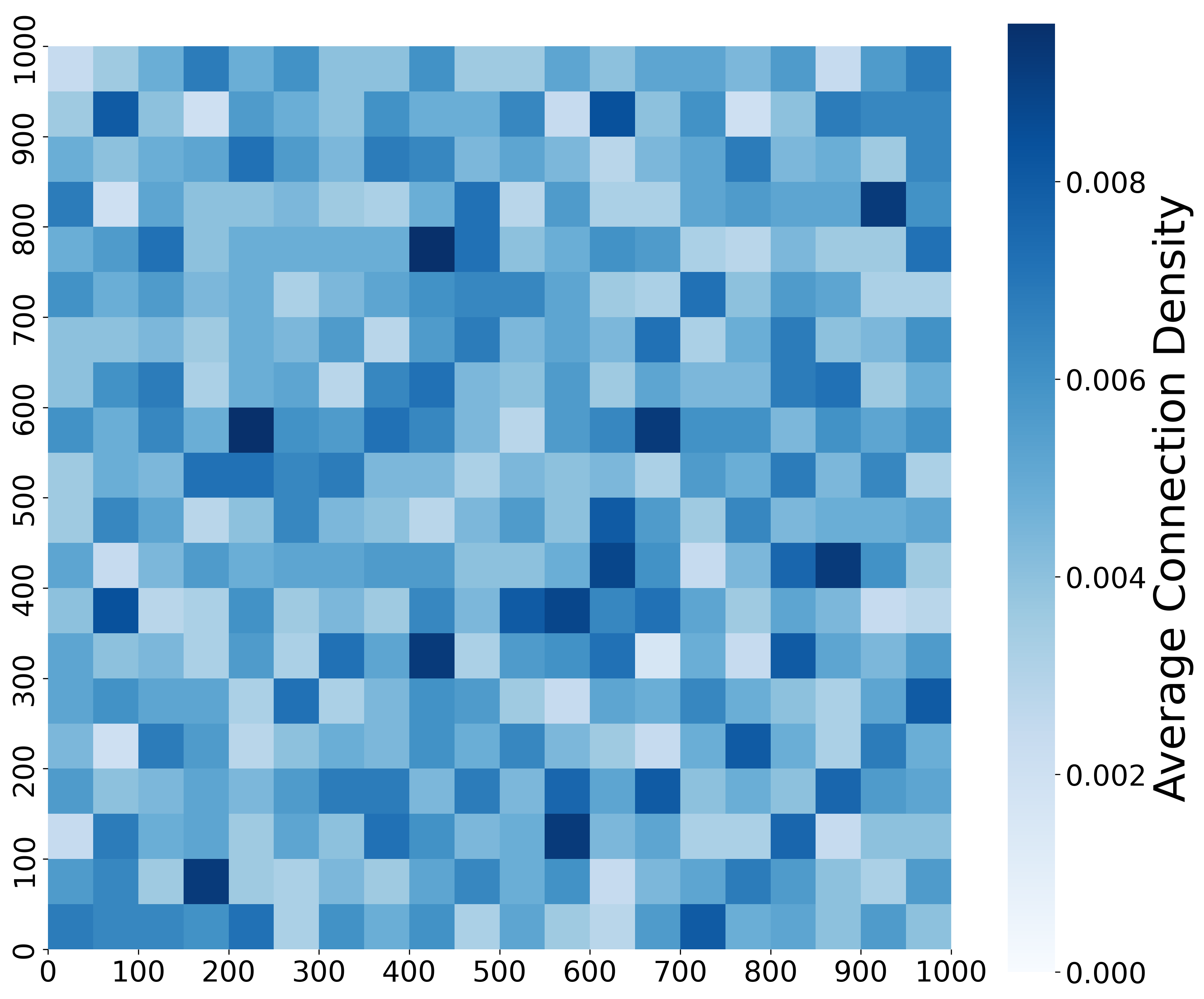}
    \small\textbf{(a)} Real topology $A$
    \label{fig:original}
\end{minipage}
\begin{minipage}[t]{0.245\textwidth}
    \centering
    \includegraphics[width=\textwidth]{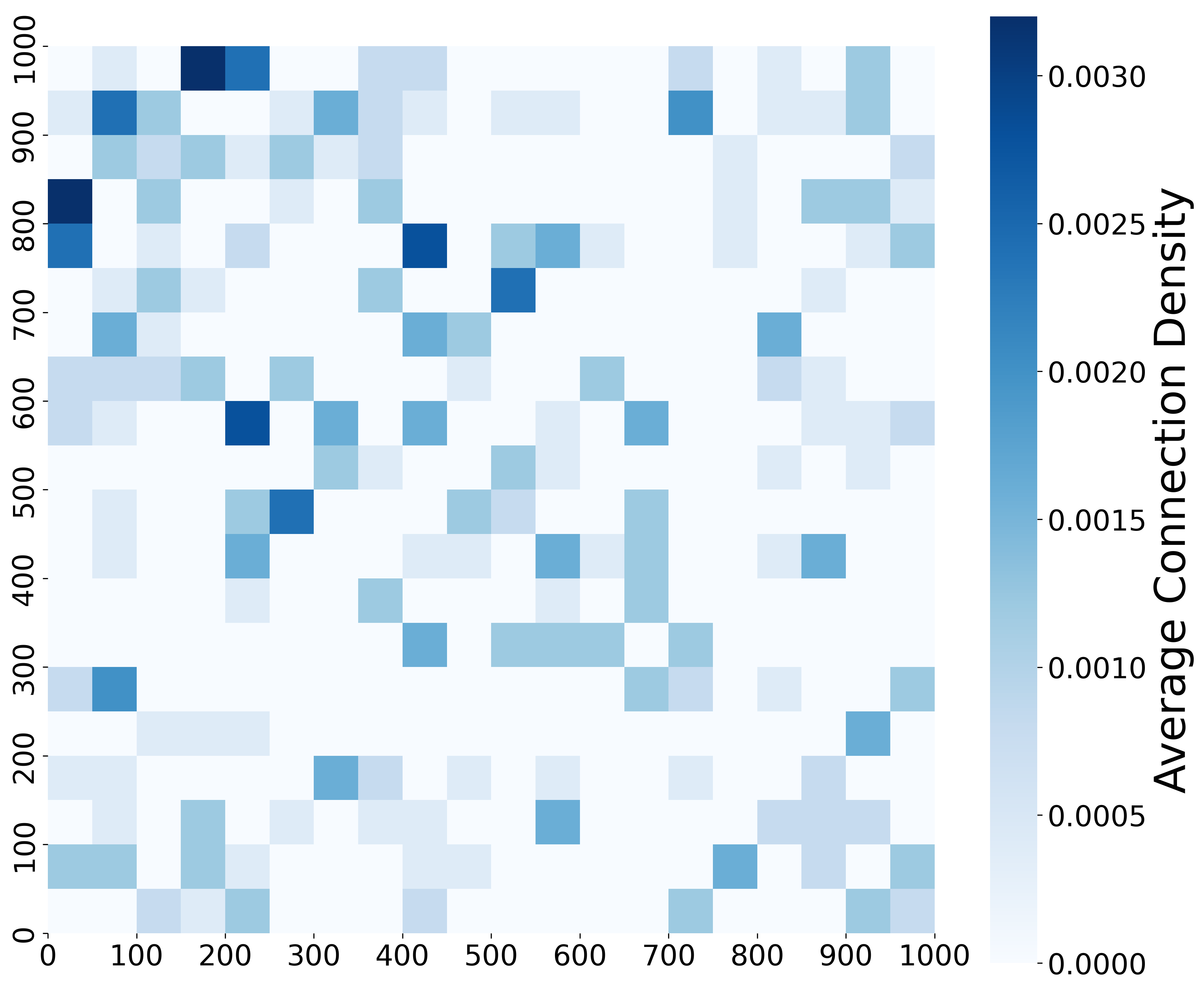}
    \small \textbf{(b)} $\Delta(A, A_{\text{Observed}})$
    \label{fig:observed}
\end{minipage}
\begin{minipage}[t]{0.245\textwidth}
    \centering
    \includegraphics[width=\textwidth]{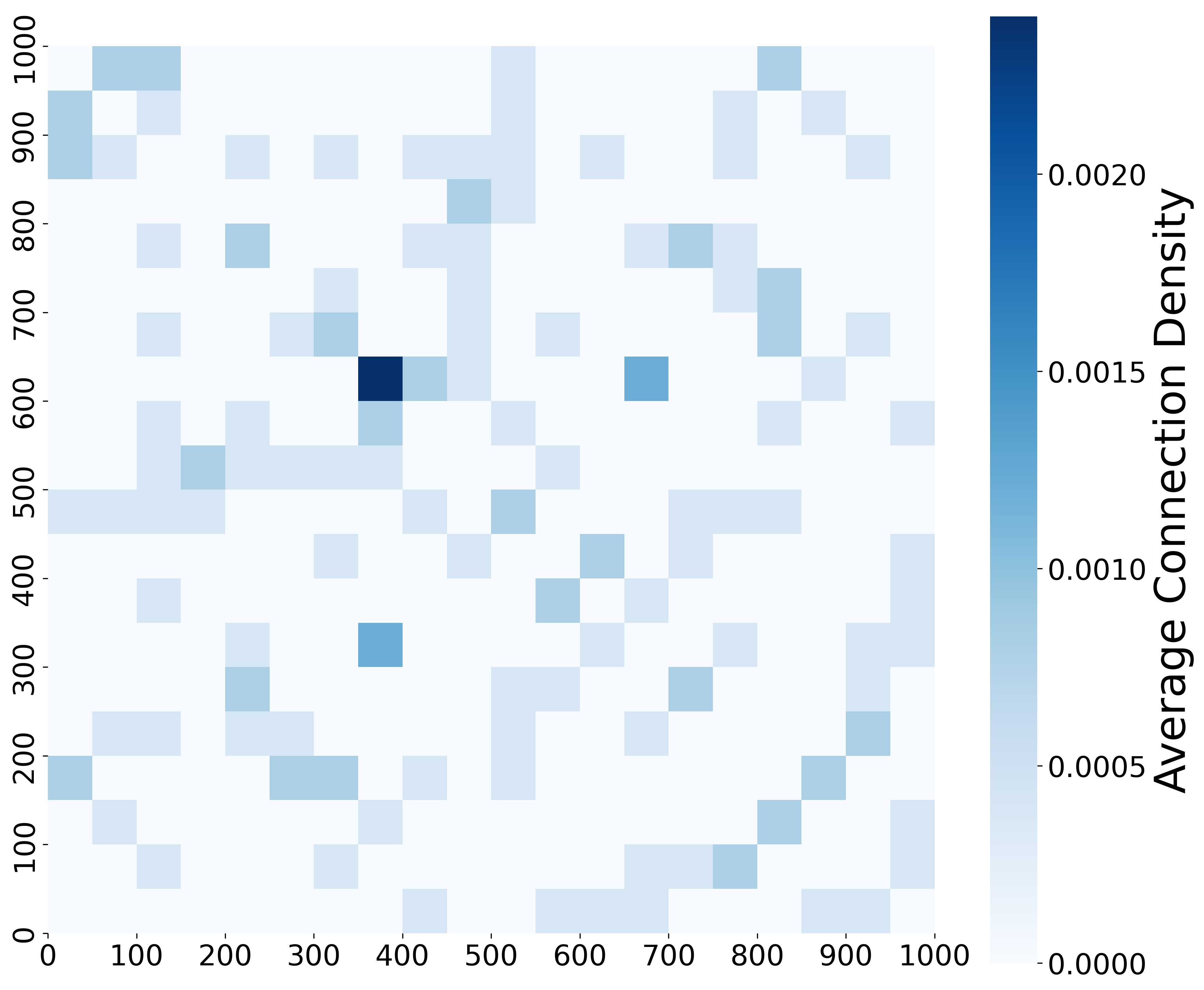}
    \small\textbf{(c)} $\Delta(A, \hat{A}_{\text{DeepNT}})$
    \label{fig:deepnt_topo}
\end{minipage}
\begin{minipage}[t]{0.245\textwidth}
    \centering
    \includegraphics[width=\textwidth]{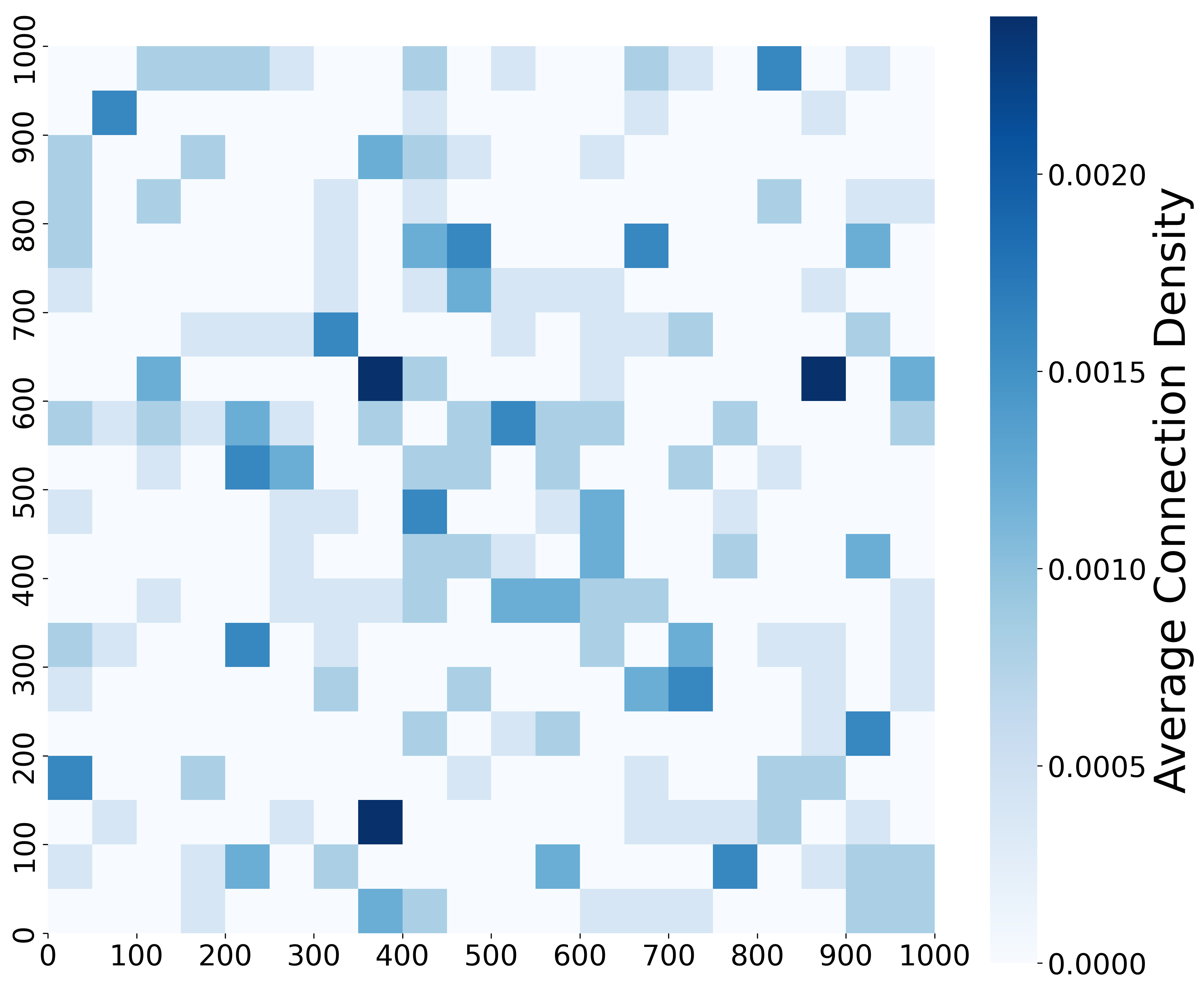}
    \small\textbf{(d)} $\Delta(A, \hat{A}_{\text{NeurTomography}})$
    \label{fig:neunt_topo}
\end{minipage}
\caption{Heatmap of the real adjacency matrix and the difference ($\Delta$) between the real and learned adjacency matrices from various models for a synthetic network with 1,000 nodes and 2,521 edges, considering a topological error rate of 0.2 and path performance metrics, such as bandwidth.}
\label{topology}
\vspace{-10pt}
\end{figure*}

As the network size increases, DeepNT shows superior scalability compared to the comparison models. For instance, for additional metrics on small datasets (e.g., the transportation dataset), comparison methods achieve comparable performance to DeepNT. At 10\% sampling, NeuTomography achieves a MAPE of 0.6948, close to DeepNT's 0.6342, while PAINT records a MAPE of 0.6508. This shows that on small networks, traditional models can be comparable to DeepNT. However, as the network size increases, such as on social network and Internet datasets, the performance gap between DeepNT and these comparison methods becomes obvious. On the Internet dataset at 30\% sampling rate, DeepNT achieves a MAPE of 0.5842, while NeuTomography lags behind with a MAPE of 0.7276. PAINT only achieves a MAPE of 0.8108 on the same dataset. 

\subsubsection{Case Study of Network Topology Reconstruction}
We further analyze the performance of network topology reconstruction given limited path information. We present a case study demonstrating the effectiveness of our proposed method for reconstructing network topology. For a network with 1,000 nodes and 2,521 edges with a topological error rate of 0.2, we visualize the heatmap of adjacency matrices where each block contains 50 nodes.

The path performance metric is the min/max metric, i.e., bandwidth, and $|S|/|T| = 30\%$. As shown in \hyperref[topology]{Figure} \ref{topology}, DeepNT successfully recovers most of the true topology with minimal deviation in topology reconstruction. The heatmaps in \hyperref[topology]{Figure} \ref{topology} demonstrate that the adjacency matrix learned by DeepNT is closer to the true adjacency matrix than the observed adjacency matrix and the adjacency matrix learned by NeuTomography. In particular, the denser and more complex parts of the network are more accurately recovered by DeepNT, which leads to smaller differences with the true adjacency matrix. 

\subsection{Ablation Study} 

To better understand how different components help our model predict various path performance metrics with incomplete network topology, we conduct ablation studies under different topology error rates $\Delta$ when $|S|/|T| = 30\%$. There are two key predefined parameters, i.e., $\alpha$ and $\gamma$, which control the contributions for sparsity and path performance bounds, respectively. We set the value of one parameter to one and the others to zero, and examine how the performance changes to show the impact of each component. 

Accordingly, two model variants, DeepNT-$\alpha$ and DeepNT-$\gamma$, are introduced. DeepNT-$\alpha$ sets $\alpha$ to $10^{-4}$ and $\gamma$ to 0, while DeepNT-$\gamma$ sets $\gamma$ to 1 and $\alpha$ to 0. We average the results of 500 {Erdős-Rényi} networks of the synthetic dataset for various tasks. \textit{Regression} represents the average results for predicting additive, multiplicative, and min/max path performance metrics, while \textit{Classification} reports the average results for predicting boolean path performance metrics. As shown in \hyperref[ablation]{Table} \ref{ablation}, when the sparsity ($\alpha$) or path performance bound ($\gamma$) constraints are removed, the performance significantly drops, demonstrating the importance of sparsity and boundary constraints under incomplete topological information. When the topology error rate $\Delta$ is small, DeepNT-$\gamma$ does not significantly improve the prediction performance. However, when $\Delta$ becomes larger, DeepNT-$\gamma$ (i.e., the path performance bound) can effectively reduce the impact of incorrect topology on prediction because it utilizes the possible correct path information to reconstruct the topology. Additionally, as $\Delta$ increases, the performance gap between DeepNT-$\alpha$ and DeepNT-$\gamma$ narrows, suggesting that maintaining sparsity in the adjacency matrix improves the model's performance lower bound. 

\begin{table}[ht]
\vspace{-10pt}
    \centering
    \captionof{table}{Ablation study results.}
    \begin{tabular}{cl|c|ll}
    \hline
    \multicolumn{1}{c}{\multirow{2}{*}{$\Delta$}} & \multicolumn{1}{c|}{\multirow{2}{*}{\textbf{Method}}}      & \multicolumn{1}{c|}{\textbf{Regression}} & \multicolumn{2}{c}{\textbf{Classification}} \\ \cline{3-5} 
    \multicolumn{2}{c|}{}  & \textbf{MAPE} $\downarrow$ & \textbf{ACC} $\uparrow$ & \textbf{$F_1$} $\uparrow$ \\ \hline
    \multicolumn{1}{c|}{\multirow{3}{*}{10\%}}  & DeepNT  & 0.0741 & 0.8124 & 0.8043 \\ \cdashline{2-5}
    \multicolumn{1}{c|}{}  & DeepNT-$\alpha$ &  0.1236 &  0.6909 & 0.7341 \\ \cdashline{2-5}
    \multicolumn{1}{c|}{}  & DeepNT-$\gamma$ & 0.1014 & 0.7375  & 0.7582 \\ \hline
    \multicolumn{1}{c|}{\multirow{3}{*}{20\%}}  & DeepNT   &  0.0852 & 0.7701  & 0.8041 \\ \cdashline{2-5}
    \multicolumn{1}{c|}{}  & DeepNT-$\alpha$ &  0.1305 &  0.6628 & 0.6733 \\ \cdashline{2-5}
    \multicolumn{1}{c|}{}  & DeepNT-$\gamma$ &  0.1187 & 0.7081 & 0.7336\\ \hline
    \multicolumn{1}{c|}{\multirow{3}{*}{30\%}}  & DeepNT & 0.1129 & 0.7226 &  0.7636 \\ \cdashline{2-5}
    \multicolumn{1}{c|}{}  & DeepNT-$\alpha$ & 0.1368 &  0.6490 & 0.6827  \\ \cdashline{2-5}
    \multicolumn{1}{c|}{}  & DeepNT-$\gamma$ & 0.1212 & 0.6905& 0.7294 \\ \hline
    \end{tabular} 
    \label{ablation}
\end{table}
\vspace{-5pt}



One of the key components of this paper is the proposed path aggregation layer, which we compare with the information aggregation layers currently widely used in various graph neural networks. Detailed results can be found in the \hyperref[app:agg]{Appendix} \ref{app:agg}. 
DeepNT with proposed path aggregation mechanism achieves the best performance for predicting additive metrics across most datasets.

\section{Conclusion}\label{sec:conclution}

In this paper, we introduce DeepNT, a novel framework for network tomography that addresses key challenges in predicting path performance metrics and network topology inference under incomplete and noisy observations.
Through comprehensive experiments on real-world and synthetic datasets, DeepNT consistently outperforms state-of-the-art methods across a variety of path performance metrics, including additive, multiplicative, min/max, and boolean metrics. DeepNT demonstrates strong scalability and robustness, particularly as the network size and complexity increase, where traditional methods struggle to maintain performance.

\bibliographystyle{ACM-Reference-Format}
\bibliography{sample-base}



\appendix
\section{Appendix}
\subsection{Datasets} \label{sec:appendix:data}
The statistics of the real-world datasets are shown in \hyperref[tab:data]{Table} \ref{tab:data}. 
For the initialization representation of nodes, if the source data already contains node features, we use them as the initialization node representations, otherwise we use binary encoding to convert the node identifier (e.g., node index) into a binary representation. For path performance metrics, we use the edge labels of the source data to generate the path labels. In addition, we generate random edge labels of other path performance metric types for some networks, and then generate the path labels. Path performance metrics of each dataset are shown in the \hyperref[tab:data:metrics]{Table} \ref{tab:data:metrics}.

\begin{table*}[htb]
\centering
\caption{Dataset Statistics: the number of networks, (average) nodes and edges. $-$ indicates the dataset has a singe network.}
\vspace{0.5mm}
\begin{tabular}{@{}l|cc|ccc|ccccc@{}}
\hline
{\multirow{2}{*}{\textbf{Statistics}}}      & \multicolumn{2}{c|}{\textbf{Internet}} & \multicolumn{3}{c|}{\textbf{Social Network}} & \multicolumn{5}{c}{\textbf{Transportation}} 
\\ \cline{2-11} 
\multicolumn{1}{c|}{}  & IPV4 & IPV6 & Epinions & Twitter & Facebook & Anaheim & Winnipeg & Terrassa & Barcelona & Gold Cost \\ \hline
Graphs & 10 & 10 & - & - & -  & - & - & - & - & -  \\ \cdashline{1-11}
Nodes & 2866.0 & 1895.7 & 75,879 & 81,306 & 4,039 & 416 & 1,057 & 1,609 & 1,020 & 4,807 \\ \cdashline{1-11}
Edges & 3119.6 & 2221.7 & 508,837 & 1768,149  & 88,234 &  914 & 2,535 & 3,264 & 2,522  & 11,140 \\ \hline
\end{tabular}
\label{tab:data}
\end{table*}

\begin{table*}[htb]
\centering
\caption{Properties of datasets. {\color{forestgreen}\cmark} of binary encoding indicates that the original data has no node features, and we use binary encoding to generate the initial node representation. {\color{burgundy}\xmark} means that binary encoding is not used, but the node features of the original data are used. For the path performance metrics, {\color{forestgreen}\cmark} for one metric indicates that the network has the true edge (link) labels of that metric, while {\color{flame}\cmark} indicates that random edge labels are generated for that performance metric.}
\begin{tabular}{@{}l|cc|ccc|ccccc@{}}
\hline
{\multirow{2}{*}{\textbf{Properties}}}      & \multicolumn{2}{c|}{\textbf{Internet\footnotemark[1]}} & \multicolumn{3}{c|}{\textbf{Social Network\footnotemark[2]}} & \multicolumn{5}{c}{\textbf{Transportation \footnotemark[3]}} 
\\ \cline{2-11} 
\multicolumn{1}{c|}{}  & IPV4 & IPV6 & Epinions & Twitter & Facebook & Anaheim & Winnipeg & Terrassa & Barcelona & Gold Cost \\ \hline
Binary Enc. & \color{forestgreen}\cmark & \color{forestgreen}\cmark & \color{burgundy}\xmark & \color{burgundy}\xmark & \color{burgundy}\xmark & \color{forestgreen}\cmark & \color{forestgreen}\cmark & \color{forestgreen}\cmark & \color{forestgreen}\cmark & \color{forestgreen}\cmark  \\ \hline
\multicolumn{11}{c}{{\fontfamily{pcr}\selectfont Additive Path Performance Metrics}}\\ \hline
Delay &  &  & \color{flame}\cmark  & \color{flame}\cmark  & \color{flame}\cmark  &  &  &  &  &  \\ \cdashline{1-11}
RTT & \color{forestgreen}\cmark & \color{forestgreen}\cmark &  &  &  &  &  &  &  &  \\ \cdashline{1-11}
Distance &  &  & \color{forestgreen}\cmark  & \color{forestgreen}\cmark  & \color{forestgreen}\cmark  &  &  &  &  &  \\ \cdashline{1-11}
Flow Time &  &  &  &  &  & \color{forestgreen}\cmark & \color{forestgreen}\cmark & \color{forestgreen}\cmark & \color{forestgreen}\cmark & \color{forestgreen}\cmark \\ \hline
\multicolumn{11}{c}{{\fontfamily{pcr}\selectfont Multiplicative Path Performance Metrics}}\\ \hline
Reliability & \color{flame}\cmark & \color{flame}\cmark &  &  &  &  &  &  &  &  \\ \cdashline{1-11}
Trust Decay &  &  & \color{flame}\cmark  & \color{flame}\cmark  & \color{flame}\cmark  &  &  &  &  &  \\ \hline
\multicolumn{11}{c}{{\fontfamily{pcr}\selectfont Min or Max Path Performance Metrics}}\\ \hline
Bandwidth & \color{flame}\cmark & \color{flame}\cmark &  &  &  &  &  &  &  &  \\ \cdashline{1-11}
Capacity &  &  &  &  &  & \color{forestgreen}\cmark & \color{forestgreen}\cmark & \color{forestgreen}\cmark & \color{forestgreen}\cmark & \color{forestgreen}\cmark \\ \hline
\multicolumn{11}{c}{{\fontfamily{pcr}\selectfont Boolean Path Performance Metrics}}\\ \hline
Is Trustworthy &  &  & \color{forestgreen}\cmark &  &  &  &  &  &  &  \\ \cdashline{1-11}
Is Secure &  &  &  &  &  & \color{flame}\cmark & \color{flame}\cmark & \color{flame}\cmark & \color{flame}\cmark & \color{flame}\cmark \\ \hline
\end{tabular}
\label{tab:data:metrics}
\end{table*}

\footnotetext[1]{\url{https://publicdata.caida.org/datasets/topology/ark}}
\footnotetext[2]{\url{https://snap.stanford.edu/data}}
\footnotetext[3]{\url{https://github.com/bstabler/TransportationNetworks}}

For the synthetic dataset, we utilize the Erdos-Renyi algorithm to generate networks of different sizes. Then, we generate random edge labels for all the above path performance metrics. When generating random edge labels for all datasets, for the addition and min/max metrics, the random labels are in $[1, 100]$, and for the multiplication metric, the random labels are in $[0.9, 0.999]$. For the Boolean metrics, each edge is randomly assigned a state of 0 or 1, where path labels are controlled to be balanced (the number of positive and negative labels will not less than 30\%).

\subsection{Implementation} \label{sec:appendix:imp}

The training data for each dataset depends on the sampling rate, i.e., $\frac{|S|}{|T|}$ which indicates how many node pairs’ end-to-end path performance metric values are used for training. Half of node pairs in $S$ is used as training data, and the other half is used as the validation set. That is, the number of node pairs actually used as training data is $\frac{|S|}{2}$. The rest of node pairs in \( T \setminus S \) are used as test data. 

To train DeepNT, we use CrossEntropyLoss as the loss function for classification tasks (Boolean metric) and MSELoss as the loss function for regression tasks (additive, multiplicative, and min/max metrics). Adam optimizer is used to optimize the model. The learning rate is set to 1e-4 across all tasks and models. The training batch is set to 1024 and the test batch is 2048 for all datasets. We use GCN as the GNN backbone, and the number of layers of GCN is 2. We use the mean pooling as the READOUT function. An one-layer MLP is used to make predictions. All models are trained for a maximum of 500 epochs using an early stop scheme with the patience of 10. The hidden dimension is set to 256. The hyperparameters we tune include the number of sampled shortest paths $N$ in 1, 2, 3, the sparsity parameter $\alpha$ in 10e-5, 10e-4, 10e-3, 10e-2, and the path performance bound parameter $\gamma$ in 0.1, 0.25, 0.5, 1, 2, 4, 8, 16. 

For the comparison methods, we follow the original settings provided by the authors. In particular, the ANMI threshold ratio is reported as 30\%. AMPR requires multiple probe tests, and we set the number of probe tests to the number of placed monitors, since each monitor tests the end-to-end path performance with other nodes.

\subsection{Ablation Study on information aggregation}\label{app:agg}

We compare our path aggregation with aggregation operations from SEAL (subgraph aggregation) \citep{zhang2018link}, LESSR (edge-order preserving aggregation) \citep{chen2020handling}, PNA (principal neighborhood aggregation) \citep{corso2020principal}, and OSAN (ordered subgraph aggregation) \citep{qian2022ordered}. A brief description of these aggregation methods is provided in \hyperref[tab:graph_models]{Table} \ref{tab:graph_models}.

The results in \hyperref[tab:agg_comparison]{Table} \ref{tab:agg_comparison} demonstrate that the path aggregation mechanism in DeepNT achieves the lowest MSE for predicting additive metrics across most datasets, outperforming alternative aggregation methods. This suggests that path-specific attention effectively captures the most probable paths with optimal performance, leading to finer-grained and more accurate network tomography. Edge-order preserving aggregation performs best in the transportation dataset, likely due to its ability to preserve sequential dependencies in structured paths. However, in other cases, subgraph and ordered subgraph aggregation methods exhibit higher errors, indicating that local subgraph representations alone may not be sufficient for capturing global path information required in network tomography.

\begin{table}[ht]
    \centering
    \small
    \captionsetup{font=small, skip=1pt} 
    \caption{MSE results using different aggregation layers.}
    \begin{tabular}{lcccc}
        \hline
        \textbf{DeepNT} & \textbf{Internet} & \textbf{Social} & \textbf{Transportation} & \textbf{Synthetic} \\
        \hline
        w/ subgraph & 117.65 & 22.90 & 21.77 & 77.93 \\
        w/ edge-order & 84.25 & 14.19 & \textbf{17.11} & 62.65 \\
        w/ principal & 94.43 & 20.01 & 22.89 & 84.07 \\
        w/ ordered subg. & 101.73 & 17.26 & 20.57 & 71.14 \\
        w/ path (ours) & \textbf{71.08} & \textbf{10.81} & 18.66 & \textbf{59.04} \\
        \hline
    \end{tabular}
    \label{tab:agg_comparison}
\end{table}

\begin{table*}[tb]
    \centering
    \caption{Comparison of different graph aggregation models.}
    \begin{tabular}{llp{8cm}}
        \hline
        \textbf{Model} & \textbf{Formula} & \textbf{Description} \\
        \hline
        \multirow{3}{*}{DeepNT} &
        $\hat{h}_v^{(n)} = h_v + \sigma \left( \sum_{x \in \mathcal{P}_{vu}^{(n)}} \alpha_{vx}^{(n)} h_x^{(n)} \right)$ &
        \textbf{Path aggregation}: Aggregates over sampled optimal paths.  \\
        &
        $\alpha_{vx}^{(n)} = \operatorname{softmax} (r^T [h_v, h_x^{(n)}])$ &\\
        &
        $h_v = \operatorname{READOUT} (\hat{h}_v^{(n)} \cdot P_{vu}^{(n)} \subset P_{vu}^L)$ & \\
        \hline
        \multirow{2}{*}{SEAL} &
        $h_v = \operatorname{AGG}(h_v : v \in G_{h}^{x,y})$ &
        \textbf{Subgraph aggregation}: Aggregates node features within en- \\
        &
        $h_{vp} = \operatorname{READOUT}(h_v : G_{h}^{x,y} \in \text{enclosing subgraphs})$ & closing subgraphs extracted for target links.\\
        \hline
        \multirow{3}{*}{LESSR} &
        $h_v = \operatorname{GRU}(h_v, h_z : z \in \mathcal{N}(v))$ &
        \textbf{Edge-order preserving aggregation}: Combines GRU-based  \\
        &
        $h_{vp} = \operatorname{Attention}(h_v, h_x, \text{shortcut connections})$ & local aggregation with global attention from shortcut paths.\\
        &
        $h_v = \operatorname{READOUT}(\operatorname{EOPA}(h_v), \operatorname{SGAT}(h_v))$ & \\
        \hline
        \multirow{3}{*}{PNA} &
        $h_{agg} = \operatorname{AGG}(h_u : u \in \mathcal{N}(v))$ &
        \textbf{Principal neighbourhood aggregation}: Combines mean,  \\
        &
        $h_{\text{scaled}} = \operatorname{Scaler}(\operatorname{deg}(v)) \cdot h_{\text{agg}}$ & max, min, and std aggregators with degree-based scaling.\\
        &
        $h_v = \operatorname{Combine}(h_{\text{scaled}})$ & \\
        \hline
        \multirow{2}{*}{OSAN} &
        $h_s = \operatorname{AGG} (f_w(v) : v \in s)$ &
        \textbf{Ordered subgraph aggregation}: Aggregates features from  \\
        &
        $h_v = \operatorname{READOUT} (h_s : v \in s, s \in S)$ & WL-labeled subgraphs containing the node.\\
        \hline
    \end{tabular}
    \label{tab:graph_models}
\end{table*}

\subsection{Expressiveness Study}\label{expressive}

A path from a source node $v$ to a target node $u$ is denoted by $p_{uv} = [v_1, v_2, \dots, v_k]$, 
where $v_1 = v$, $v_k = u$, and $(v_i, v_{i+1}) \in E$ for $i \in \{1, \dots, k-1\}$. 
Paths contain distinct vertices, and the length of the path is given by $|p_{uv}| = k-1$, 
defined as the number of edges it contains. In this work, we consider paths that adhere to these criteria.

In practice, we only consider paths up to a fixed length $L$. 
Let $\mathcal{P}_{uv}^L$ denote the set of the sampled top-$N$ optimal paths from $u$ to $v$, selected based on the best path performance, with lengths not exceeding $L$.
Recall that $S$ and $T$ represent the set of node pairs with observed path performance and all possible node pairs, respectively. 
Define $\mathcal{SP} = \bigcup_{<u,v> \in S} \mathcal{P}_{uv}^L$ as the collection of all sampled paths, 
and let $\mathcal{AP}$ denote the collection of all paths between the node pair combinations in $T$.
We have $\mathcal{P}_{uv}^L \subset \mathcal{SP} \subseteq \mathcal{AP}$, where $\mathcal{SP} \to \mathcal{AP}$ as $N \to \infty$ and $S \to T$. To strengthen the proof, we first introduce the concepts of WL-Tree and Path-Tree as defined by \citeauthor{michel2023path}.

\begin{definition}[WL-Tree rooted at $v$]
    Let $G = (V, E)$. A WL-Tree $W_v^L$ is a tree rooted at node $v \in V$ encoding the structural information captured by the 1-WL algorithm up to $L$ iterations. At each iteration, the children of a node $u$ are its direct neighbors, $\mathcal{N}(u) = \{w \mid (u, w) \in E\}$. 
\end{definition}

\begin{definition}[Path-Tree rooted at $v$]
    Let $G = (V, E)$. A Path-Tree $P_v^L$ rooted at a node $v \in V$ is a tree of height $L$, where the node set at level $k$ is the multiset of nodes that appear at position $k$ in the paths of $\mathcal{P}_v^L$, i.e., $\{ u \mid p^L(k) = u \text{ for } p^L \in \mathcal{P}_v^L \}.$ Nodes at level $k$ and level $k+1$ are connected if and only if they occur in adjacent positions $k$ and $k+1$ in any path $p^L \in \mathcal{P}_v^L$.
\end{definition}

\begin{theorem}[DeepNT-$\mathcal{AP}$ Expressiveness Beyond 1-WL]\label{th2}
Let $G = (V, E)$, $W_v^L$ and $W_u^L$ denote the WL-Trees of height $L$ rooted at nodes $v, u \in V$, respectively. Let $f_{\text{DeepNT}}(v)$ and $f_{\text{DeepNT}}(u)$ represent the embeddings produced by DeepNT when it has access to the complete set of paths $\mathcal{AP}$. Then the following holds:
\begin{enumerate}
    \item If $W_v^L \neq W_u^L$, then $f_{\text{DeepNT}}(v) \neq f_{\text{DeepNT}}(u)$.
    \item If $W_v^L = W_u^L$, it is still possible that $f_{\text{DeepNT}}(v) \neq f_{\text{DeepNT}}(u)$.
\end{enumerate}
\end{theorem}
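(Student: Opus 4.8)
Both claims follow from one fact about DeepNT together with one fact imported from \citeauthor{michel2023path}. The DeepNT fact is that, with access to $\mathcal{AP}$, the embedding $f_{\text{DeepNT}}(v)$ can be made an injective encoding of the Path-Tree $P_v^L$. The imported fact is the comparison of colorings, $P_v^L = P_u^L \Rightarrow W_v^L = W_u^L$, together with the existence of graphs where the converse fails. Granting these, Part~1 is the contrapositive chain $W_v^L \neq W_u^L \Rightarrow P_v^L \neq P_u^L \Rightarrow f_{\text{DeepNT}}(v) \neq f_{\text{DeepNT}}(u)$, and Part~2 is a single witnessing graph. I would first fix two preliminaries: that the comparison is in the anonymous (uniform initial color) regime, as is standard for WL-type statements; and that the single-node specialization $f_{\text{DeepNT}}(v)$ runs the path-aggregation layer over the multiset of all simple paths of length at most $L$ leaving $v$ --- i.e.\ over $\mathcal{P}_v^L$ --- since the model has $\mathcal{AP}$, and that this family is prefix-closed (every prefix of a simple path is a simple path, all of which lie in $\mathcal{AP}$).

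\textbf{Step 1: DeepNT injectively encodes $P_v^L$.} I would show that $\hat h^{(n)}_v = h_v + \sigma\big(\sum_{z} \alpha^{(n)}_{vz} h^{(n)}_z\big)$ can, for suitable $r$ and $\sigma$, be made an injective function of the node set of $p^{(n)}$: the weights $\alpha^{(n)}_{vz}$ are a function we may choose of $h^{(n)}_z$, so $\sum_{z} \alpha^{(n)}_{vz} h^{(n)}_z$ reduces to $\sum_{z\in p^{(n)}} g(h^{(n)}_z)$ for a chosen $g$, which can be made injective over the finite node universe in the style of the GIN sum-pooling argument; likewise the permutation-invariant READOUT over $\{\hat h^{(n)}_v : p^{(n)}\in\mathcal{P}_v^L\}$ can be made injective on multisets. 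Because $\mathcal{P}_v^L$ is prefix-closed, the resulting multiset of (node set, length) pairs reconstructs $P_v^L$ level by level: the length-$1$ paths give the level-$1$ nodes, and a path of length $k+1$ whose length-$k$ prefix has already been placed fixes its remaining node at level $k+1$; iterating recovers both the level multisets and the level-to-level adjacencies of $P_v^L$. Hence $f_{\text{DeepNT}}(v)$ is an injective encoding of $P_v^L$, so $P_v^L \neq P_u^L \Rightarrow f_{\text{DeepNT}}(v) \neq f_{\text{DeepNT}}(u)$.

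\textbf{Steps 2 and 3: the two conclusions.} For Part~1, I would invoke the coloring comparison of \citeauthor{michel2023path}; its easy direction, which suffices here, is an induction: a neighbor $w$ of $v$ appears as a level-$1$ child of the root of $P_v^L$ carrying its own sub-Path-Tree of height $L-1$, which by induction determines $W_w^{L-1}$ and hence the $1$-WL color of $w$, so equal Path-Trees give $v$ and $u$ the same multiset of neighbor colors at every round, i.e.\ $W_v^L = W_u^L$. Contraposing and composing with Step~1 gives (1). For Part~2, take $G = C_6 \sqcup C_3$ (the classical $C_6$-versus-two-triangles example works identically, and a connected gadget can be substituted if connectedness is required): every vertex has degree $2$ with degree-$2$ neighbors, so the anonymous $1$-WL never separates any two vertices and $W_v^L = W_u^L$ for all $L$. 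Yet for $L \ge 3$ a vertex $v$ on the hexagon has simple paths of length $3,4,5$ while a vertex $u$ on the triangle has none of length exceeding $2$, so $P_v^L$ and $P_u^L$ differ already in height; Step~1 then yields $f_{\text{DeepNT}}(v) \neq f_{\text{DeepNT}}(u)$, proving (2).

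\textbf{Main obstacle.} The crux is the injectivity in Step~1. Unlike plain sum-pooling, the attention coefficients $\alpha^{(n)}_{vz}$ depend on the very states being aggregated, so injectivity is not automatic; I would handle this with an existential (``there exist parameters'') argument, which is the correct reading of an expressiveness theorem anyway: choose $r,\sigma$ and a sum-style READOUT so the path aggregation becomes an injective set encoder, decoupling the recursion. A secondary point to state carefully is that the natively pairwise embedding, when specialized and aggregated over all target nodes, reconstructs exactly $\mathcal{P}_v^L$ and discards nothing; together with prefix-closedness this is what makes the reconstruction in Step~1 go through.
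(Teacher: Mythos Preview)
Your proposal is correct and takes essentially the same approach as the paper: both arguments reduce to (i) the \citeauthor{michel2023path} comparison $W_v^L \neq W_u^L \Rightarrow P_v^L \neq P_u^L$ (with the converse failing in general) and (ii) the claim that DeepNT's path-aggregation layer can be made injective on Path-Trees. If anything, you are more explicit than the paper---you sketch the induction for the easy direction of the Path-Tree/WL-Tree comparison, give a concrete witness ($C_6 \sqcup C_3$) for Part~2, and flag the attention-weighted aggregation as the genuine obstacle to injectivity---whereas the paper simply asserts injectivity is ``straightforward'' and cites \citeauthor{michel2023path} for both the refinement and the strictness.
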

\begin{proof}
To prove this statement, we refer to Theorem 3.3 from \cite{michel2023path}, which states that if $W_v^L$ is structurally different from $W_u^L$ (i.e., not isomorphic), then $P_v^L$ is also structurally different from $P_u^L$. Moreover, $P_v^L$ and $P_u^L$ can still differ even if $W_v^L$ and $W_u^L$ are identical.

We first address the case where $W_v^L \neq W_u^L$. It follows that $P_v^L$ and $P_u^L$ will not be isomorphic. 
The path aggregation layer in DeepNT is straightforward to prove as injective, as it employs a permutation-invariant readout function.
Consequently, DeepNT aggregates path-centric structural information from $P_v^L$ and $P_u^L$ to produce embeddings $f_{\text{DeepNT}}(v)$ and $f_{\text{DeepNT}}(u)$, which are guaranteed to be distinct.

Now consider the case where $W_v^L = W_u^L$. This implies that 1-WL cannot distinguish between $v$ and $u$. However, $P_v^L$ and $P_u^L$ may still differ. The path aggregation layer of DeepNT captures path-centric structural information from $P_v^L$ and $P_u^L$, resulting in distinct embeddings $f_{\text{DeepNT}}(v)$ and $f_{\text{DeepNT}}(u)$. Thus, DeepNT surpasses the expressiveness of 1-WL. This completes the proof.
\end{proof}

Finally, \hyperref[th3]{Theorem}~\ref{th3} can be readily proved by noting that the node pairs $\langle u, v \rangle$ and $\langle u', v' \rangle$ are represented by concatenated node embeddings. Since $\mathcal{P}_{uv}^L \neq \mathcal{P}_{u'v'}^L$, the distinctiveness of these representations is ensured by the expressive power of DeepNT, which surpasses that of 1-WL.

\subsection{Proof of \hyperref[th:connect]{Theorem} \ref{th:connect}}\label{proof:connect}
\begin{proof}
For the weak connectivity of the entire graph, following \citeauthor{zhao2019spatial}, we show that for any nonzero vector \( x \in \mathbb{R}^{|V|} \):
\[
x^\top Z_G x = \sum_{i \neq j} A_{ij}(x_i - x_j)^2 + \frac{1}{|V|} \left( \sum_{i=1}^{|V|} x_i \right)^2 > 0.
\]
The first term ensures that differences between connected nodes contribute positively, while the rank-1 term \( \frac{1}{|V|} \left( \sum_{i=1}^{|V|} x_i \right)^2 \) prevents the existence of isolated components when edge directions are ignored. This guarantees that the graph \( G \) is weakly connected.

For strong connectivity within each component \( g \in \mathcal{G} \), we establish both necessity and sufficiency. For necessity, if \( g \) is strongly connected, then its adjacency submatrix \( A_g \) is irreducible by the Perron--Frobenius theorem, implying that \( A_g + A_g^\top \) provides bidirectional coupling between any partition of nodes in \( g \). Combined with the positive diagonal terms \( \operatorname{diag}(A_g \cdot \mathbf{1}^\top) + \operatorname{diag}(A_g^\top \cdot \mathbf{1}^\top) \) and the rank-1 term \( 2\left(\frac{1}{|g|}\mathbf{1}\mathbf{1}^\top\right) \), this ensures \( Z_g + Z_g^\top \succ 0 \).

For sufficiency, we prove by contradiction. If \( g \) were not strongly connected, then \( A_g \) would be permutable to a block upper-triangular form as
\(
A_g = \begin{pmatrix} A_{11} & A_{12} \\ 0 & A_{22} \end{pmatrix},
\)
where \( A_{11} \) and \( A_{22} \) correspond to disconnected subcomponents. We could construct a nonzero vector \( y \) conforming to these blocks such that \( y^\top (Z_g + Z_g^\top) y = 0 \), contradicting the positive definiteness of \( Z_g + Z_g^\top \). Therefore, \( Z_g + Z_g^\top \succ 0 \) if and only if \( g \) is strongly connected.
This completes the proof.
\end{proof}

\subsection{Proof of \hyperref[th1]{Theorem} \ref{th1}}\label{proof}
\begin{proof}[Proof Sketch]
The coercivity of the objective function \( g(\theta, \tilde{A}) + \alpha \|\tilde{A}\|_1 \) ensures that the sequence \( \{(\theta^k, \tilde{A}^k)\} \) is bounded. By the Bolzano–Weierstrass theorem, there exists at least one limit point \( (\theta^*, \tilde{A}^*) \) \cite{wen2017linear}. Since the step size satisfies \( \frac{1}{L} \leq \omega \leq \sqrt{\frac{L}{L + l}} \), the function sequence \( \{\mathcal{F}(\theta^k, \tilde{A}^k)\} \) is non-increasing and converges to a finite value. The proximal gradient updates ensure \( \|\nabla g(\theta^{k+1}, \tilde{A}^{k+1}) - \nabla g(\theta^k, \tilde{A}^k)\| \to 0 \), implying that the gradients converge. Taking the limit of the update steps, we obtain \( \nabla_\theta g(\theta^*, \tilde{A}^*) = \mathbf{0} \), and the optimality condition for \( \tilde{A} \) satisfies \( \mathbf{0} \in \nabla_{\tilde{A}} g(\theta^*, \tilde{A}^*) + \alpha \partial \|\tilde{A}^*\|_1 \). Since every limit point of \( \{(\theta^k, \tilde{A}^k)\} \) satisfies these conditions, the sequence converges to a stationary point, completing the proof.
\end{proof}

\begin{proof}


With the chosen step size satisfying \( \frac{1}{L} \leq \omega \leq \sqrt{\frac{L}{L + l}} \), the proximal gradient algorithm ensures:
\[
\mathcal{F}(\theta^{k+1}, \tilde{A}^{k+1}) \leq \mathcal{F}(\theta^k, \tilde{A}^k).
\]
This implies that the sequence \( \{\mathcal{F}(\theta^k, \tilde{A}^k)\} \) is non-increasing. Since \( \mathcal{F}(\theta, \tilde{A}) \) is bounded below (due to the coercivity of the \( \ell_1 \)-regularization term \( \alpha \|\tilde{A}\|_1 \)), the sequence \( \{\mathcal{F}(\theta^k, \tilde{A}^k)\} \) converges to a finite value. 

The boundedness of \( \mathcal{F}(\theta^k, \tilde{A}^k) \) ensures that the sequence \( \{(\theta^k, \tilde{A}^k)\} \) is bounded, which means \( \{(\theta^k, \tilde{A}^k)\} \) has at least one limit point \( (\theta^*, \tilde{A}^*) \).

Since \(\|(\theta^{k+1}, \tilde{A}^{k+1}) - (\theta^k, \tilde{A}^k)\| \to 0\), and \(\nabla g\) is Lipschitz continuous, it follows that:
\[
\|\nabla g(\theta^{k+1}, \tilde{A}^{k+1}) - \nabla g(\theta^k, \tilde{A}^k)\| \to 0.
\]
Therefore, the gradients \(\nabla g(\theta^k, \tilde{A}^k)\) converge to \(\nabla g(\theta^*, \tilde{A}^*)\) as \(k \to \infty\).

The update for \(\theta\) in \hyperref[alg:training]{Algorithm} \ref{alg:training} is:
\[
\theta^{k+1} - \overline{\theta}^k = -\omega \nabla_\theta g(\overline{\theta}^k, \overline{A}^k).
\]
As \(k \to \infty\), \(\overline{\theta}^k \to \theta^*\), \(\theta^{k+1} - \overline{\theta}^k \to 0\) and \(\nabla_\theta g(\overline{\theta}^k, \overline{A}^k) \to \nabla_\theta g(\theta^*, \tilde{A}^*)\). Then, it follows that:
\[
\nabla_\theta g(\theta^*, \tilde{A}^*) = \mathbf{0}.
\]
The update for \( \tilde{A} \) in \hyperref[alg:training]{Algorithm} \ref{alg:training} involves solving the proximal operator:
\[
\tilde{A}^{k+1} = \arg\min_{\tilde{A}} \left( \frac{1}{2} \left\| \tilde{A} - \left( \overline{A}^k - \omega \nabla_{\tilde{A}} g(\overline{\theta}^k, \overline{A}^k) \right) \right\|_F^2 + \omega \alpha \|\tilde{A}\|_1 \right).
\]
This optimization is equivalent to applying the proximal mapping:
\[
\tilde{A}^{k+1} = \operatorname{prox}_{\omega \alpha \|\cdot\|_1}\left( \overline{A}^k - \omega \nabla_{\tilde{A}} g(\overline{\theta}^k, \overline{A}^k) \right),
\]
where \( \operatorname{prox}_{\lambda \|\cdot\|_1}(f) = S_{\lambda}(f) \) is the soft-thresholding operator. The proximal mapping satisfies the optimality condition:
\[
\mathbf{0} \in \tilde{A}^{k+1} - \left( \overline{A}^k - \omega \nabla_{\tilde{A}} g(\overline{\theta}^k, \overline{A}^k) \right) + \omega \alpha \partial \|\tilde{A}^{k+1}\|_1.
\]
Rearranging this condition gives:
\[
\mathbf{0} \in \nabla_{\tilde{A}} g(\overline{\theta}^k, \overline{A}^k) + \frac{1}{\omega} (\tilde{A}^{k+1} - \overline{A}^k) + \alpha \partial \|\tilde{A}^{k+1}\|_1.
\]
As \( k \to \infty \), the extrapolated sequence \( \overline{A}^k \to \tilde{A}^* \) and the proximal updates \( \tilde{A}^{k+1} \to \tilde{A}^* \). Consequently, the term \( (\tilde{A}^{k+1} - \overline{A}^k)/\omega \to \mathbf{0} \). Thus, the limit point \( \tilde{A}^* \) satisfies:
\[
\mathbf{0} \in \nabla_{\tilde{A}} g(\theta^*, \tilde{A}^*) + \alpha \partial \|\tilde{A}^*\|_1.
\]
We conclude that \( (\theta^*, \tilde{A}^*) \) is a stationary point of the optimization problem since both optimality conditions are satisfied:
\[
\mathbf{0} \in \nabla_\theta g(\theta^*, \tilde{A}^*), \quad
\mathbf{0} \in \nabla_{\tilde{A}} g(\theta^*, \tilde{A}^*) + \alpha \partial \|\tilde{A}^*\|_1.
\]
The algorithm may adjust \(\tilde{A}^{k+1}\) to ensure connectivity. This adjustment does not violate convergence guarantees because it is a bounded perturbation that preserves the descent property.

Therefore, the sequence \( \{(\theta^k, \tilde{A}^k)\} \) converges to the stationary point \( (\theta^*, \tilde{A}^*) \):
\(
\lim_{k \to \infty} (\theta^k, \tilde{A}^k) = (\theta^*, \tilde{A}^*).
\)
This establishes the convergence of the algorithm and completes the proof.

\end{proof}

\subsection{Proof of \hyperref[thm:convergence]{Theorem} \ref{thm:convergence}}\label{proof:convergence}

\begin{proof}
As \(N \to \infty\), the sampled paths \(\mathcal{SP}\) converge to the complete set of paths for the observed node pairs \(S\). As \(S \to T\), the set of observed node pairs expands to include all possible node pairs in \(T = V \times V\). Therefore, \(\mathcal{SP} \to \mathcal{AP}\) as \(N \to \infty\) and \(S \to T\), which means DeepNT-\(\mathcal{SP}\) converges to DeepNT-\(\mathcal{AP}\).

By \hyperref[th2]{Theorem}~\ref{th2} (DeepNT-$\mathcal{AP}$ Expressiveness Beyond 1-WL) and \hyperref[th3]{Theorem}~\ref{th3} (DeepNT-$\mathcal{AP}$ Distinguishes Node Pairs Beyond 1-WL), DeepNT-$\mathcal{AP}$ can uniquely identify and distinguish all node pairs based on differences in their path sets. Moreover, by \hyperref[th1]{Theorem}~\ref{th1} (Convergence of DeepNT's Optimization), the optimization of DeepNT converges to a stationary point \((\theta^*, \tilde{A}^*)\). This ensures that the empirical loss over the observed pairs \(S\) is minimized:
\[
\mathcal{L}(\theta^*, \tilde{A}^*) = \sum_{\langle u, v \rangle \in S} l(f_{\text{DeepNT}}(u, v; \theta^*, \tilde{A}^*), y_{uv}),
\]
where \(l(\cdot, \cdot)\) measures the error between the predicted metrics \(\hat{y}_{uv}\) and the true metrics \(y_{uv}\). Consequently, as \(S \to T\), the training error diminishes:
\[
\mathbb{E}_{\langle u, v \rangle \sim S}[\lvert \hat{y}_{uv} - y_{uv} \rvert] \to 0.
\]

As \(S \to T\), the observed set \(S\) becomes dense, covering all node pairs in \(T = V \times V\). Therefore, the unobserved set \(T \setminus S\) becomes empty, i.e., \(T \setminus S \to \emptyset\). Combined with the expressiveness of DeepNT and the completeness of path sampling, the model generalizes well to unobserved pairs \(\langle u, v \rangle \in T \setminus S\). This ensures that the generalization error also diminishes:
\[
\mathbb{E}_{\langle u, v \rangle \sim T \setminus S}[\lvert \hat{y}_{uv} - y_{uv} \rvert] \to 0.
\]

Combining these results, the total error for all node pairs in \(T\), which is the sum of the training error and the generalization error, converges to zero:
\[
\epsilon_{\text{total}} = \mathbb{E}_{\langle u, v \rangle \sim S}[\lvert \hat{y}_{uv} - y_{uv} \rvert] + \mathbb{E}_{\langle u, v \rangle \sim T \setminus S}[\lvert \hat{y}_{uv} - y_{uv} \rvert] \to 0.
\]

Finally, as \(N \to \infty\) and \(S \to T\), DeepNT-\(\mathcal{SP}\) converges to DeepNT-\(\mathcal{AP}\), and the predicted metrics \(\hat{y}_{uv}\) converge in expectation to the true metrics \(y_{uv}\):
\[
\lim_{S \to T} \lim_{N \to \infty} \mathbb{E}_{\langle u, v \rangle \sim T} \left[\lvert \hat{y}_{uv} - y_{uv} \rvert \right] = 0.
\]
This completes the proof.
\end{proof}

\end{document}